\title{Agnostically Learning Single-Index Models using Omnipredictors}
\author{%
  Aravind Gollakota\thanks{\texttt{aravindg@cs.utexas.edu}. Supported by NSF award AF-1909204 and the NSF AI Institute for Foundations of Machine Learning (IFML).} \\
  UT Austin \\
  \and
  Parikshit Gopalan\thanks{\texttt{parikg@apple.com}.} \\
  Apple \\
  \and
  Adam R. Klivans\thanks{\texttt{klivans@cs.utexas.edu}. Supported by NSF award AF-1909204 and the NSF AI Institute for Foundations of Machine Learning (IFML).} \\
  UT Austin \\
  \and
  Konstantinos Stavropoulos\thanks{\texttt{kstavrop@cs.utexas.edu}. Supported by NSF award AF-1909204, the NSF AI Institute for Foundations of Machine Learning (IFML), and by scholarships from Bodossaki Foundation and Leventis Foundation.} \\
  UT Austin
}
\theoremstyle{plain}
\newtheorem{theorem}{Theorem}[section]
\newtheorem{lemma}[theorem]{Lemma}
\newtheorem{proposition}[theorem]{Proposition}
\theoremstyle{definition}
\newtheorem{definition}[theorem]{Definition}
\theoremstyle{remark}
\numberwithin{equation}{section}
\newcommand{\wh}{\widehat}
\newcommand{\R}{\mathbb{R}}
\newcommand{\mH}{\mathcal{H}}
\newcommand{\x}{\mathbf{x}}
\newcommand{\w}{\mathbf{w}}
\newcommand{\logf}[2]{\ensuremath{\log\left(\frac{#1}{#2}\right)}}
\newcommand{\zo}{\ensuremath{\{0,1\}}}
\newcommand{\izo}{\ensuremath{[0,1]}}
\renewcommand{\hat}{\wh}
\newcommand{\opt}{\mathrm{opt}}
\newcommand{\C}{\mathcal{C}}
\DeclareMathOperator{\poly}{poly}
\def\Ball{\mathbb{B}}
\def\Djoint{D}
\def\range{\textnormal{ran}}
\def\mY{\mathcal{Y}}
\def\ind{\mathbbm{1}}
\DeclareMathOperator*{\ex}{\mathbb{E}}
\def\elltwoerror{\mathrm{err}_2}
\def\elloneerror{\mathrm{err}_1}
\def\ws{\mathcal{W}}
\def\link{u}
\def\Dmarginal{D_{\x}}
\def\dparam{\lambda}
\def\vv{\mathbf{v}}
\def\S{\mathbb{S}}
\def\N{\mathbb{N}}
\def\f{f}
\def\g{g}
\DeclareMathOperator*{\pr}{\mathbb{P}}
\def\L{\mathcal{L}}
\def\wnorm{B}
\def\vv{\mathbf{v}}
\def\speedparam{\gamma}
\def\fenchelpairs{\mathcal{F}}
\def\fbound{R}
\DeclareMathOperator{\Bregman}{\mathbb{D}}
\def\expon{\gamma}
\def\kl{\Bregman_{\textnormal{KL}}}
\def\crossentropy{\mathrm{CE}}
\DeclareMathOperator{\simclass}{SIM}
\DeclareMathOperator{\glmclass}{GLM}
\newcommand{\signal}{\mathrm{signal}}
\newcommand{\random}{\mathrm{random}}
\begin{document}

\maketitle

\begin{abstract}
    We give the first result for agnostically learning Single-Index Models (SIMs) with arbitrary monotone and Lipschitz activations. All prior work either held only in the realizable setting or required the activation to be known. Moreover, we only require the marginal to have bounded second moments, whereas all prior work required stronger distributional assumptions (such as anticoncentration or boundedness). Our algorithm is based on recent work by \cite{GopalanHKRW23} on omniprediction using predictors satisfying calibrated multiaccuracy. Our analysis is simple and relies on the relationship between Bregman divergences (or matching losses) and $\ell_p$ distances. We also provide new guarantees for standard algorithms like GLMtron and logistic regression in the agnostic setting. 
\end{abstract}
\newpage

\section{Introduction}

Generalized Linear Models (GLMs) and Single-Index Models (SIMs) constitute fundamental frameworks in supervised learning \cite{mccullagh1984glms, agresti2015foundations}, capturing and generalizing basic models such as linear and logistic regression. In the GLM framework, labeled examples $(\x,y)$ are assumed to satisfy $\ex[y|\x] = {\link}^{-1}(\w\cdot \x)$, where ${\link}$ is a known monotone function (called the link function) and $\w$ is an unknown vector. Single-Index Models (SIMs) are defined similarly, but for the case when the monotone link function ${\link}$ is also unknown.

In the realizable setting where the labels are indeed generated according to a GLM with a given Lipschitz link function, the GLMtron algorithm of \cite{kakade2011efficient} is a simple and efficient learning algorithm. When the ground truth is only assumed to be a SIM (and, hence, the link function is unknown), it can be learned efficiently by the Isotron algorithm \cite{KalaiS09,kakade2011efficient}.

In this work, we consider the significantly more challenging agnostic setting, where the labels are arbitrary and not necessarily realizable. Moreover, we do not assume that we know the optimal activation; our goal is to output a predictor that has error comparable to that of the optimal SIM, whatever its activation may be. That is, we must be competitive against not only all possible weights but also all possible monotone and Lipschitz link functions that might fit the distribution. Concretely, consider a distribution $\Djoint$ over $\R^d\times [0,1]$ and denote the squared error of a function $h:\R^d\to \R$ by $\elltwoerror(h) = \ex_{(\x,y)\sim\Djoint}[(y - h(\x))^2]$. Let $\opt(\simclass)$ denote optimal value of $\elltwoerror(h)$ over all SIMs $h$ with bounded weights and arbitrary $1$-Lipschitz monotone activations (we call the inverse $u^{-1}$ of a link function $u$ the activation function). Given access to samples from $\Djoint$, our goal is to compute a predictor $p:\R^d\to [0,1]$ with error $\elltwoerror(p)$ comparable to the error of the optimal SIM:
\[
    \elltwoerror(p) \leq \opt(\simclass) + \epsilon.
\]

Our main result is the first efficient learning algorithm for this problem.
\begin{theorem}[Informal, see Theorem \ref{theorem:main}]\label{informal-theorem:main-result}
    Let $\simclass_{B}$ denote the class of SIMs of the form $\x \mapsto \link^{-1}(\w \cdot \x)$ for some $1$-Lipschitz function $\link^{-1}$ and $\|\w\|_2\le B$. Let $\Djoint$ be any distribution over $\R^d\times [0,1]$ whose marginal on $\R^d$ has bounded second moments. There is an efficient algorithm that agnostically learns $\simclass_B$ over $\Djoint$ up to error
    \[
        \elltwoerror(p) \le O\big(B\sqrt{\opt(\simclass_B)}\big) + \epsilon.
    \]
\end{theorem}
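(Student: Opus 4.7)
My plan is to instantiate the omnipredictor framework of \cite{GopalanHKRW23}. First I would invoke their algorithm to produce, in polynomial time, a predictor $p:\R^d\to[0,1]$ that is simultaneously $\alpha$-calibrated and $\alpha$-multiaccurate against the linear hypothesis class $\{\x\mapsto\w\cdot\x : \|\w\|_2\le B\}$, for some small slack $\alpha$ chosen polynomially in $\epsilon$. Their main theorem then certifies that such a $p$ is an omnipredictor for the family of matching losses $\ell_\sigma(t,y)=\Phi(t)-yt$ (with $\Phi'=\sigma$) indexed by $1$-Lipschitz monotone activations: for every such $\sigma$ and every $\|\w\|_2\le B$,
\[
    \ex[\ell_\sigma(\sigma^{-1}(p(\x)),\,y)] \;\le\; \ex[\ell_\sigma(\w\cdot\x,\,y)] + \alpha.
\]

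Next I would specialize this bound to the optimal SIM. Let $(\sigma^*,\w^*)$ achieve $\opt(\simclass_B)$, and write $h^*=\sigma^*(\w^*\cdot\x)$, $y^* = \ex[y\mid\x]$, and $t^* = \sigma^{*-1}(y^*)$. Subtracting the common $\x$-measurable term $\ex[\ell_{\sigma^*}(t^*,y)]$ from both sides (using $\ex[y\cdot f(\x)]=\ex[y^*\cdot f(\x)]$ for any $\x$-measurable $f$) rewrites the omniprediction inequality as the Bregman-divergence comparison
\[
    \ex[\Bregman_{\Phi^*}(\sigma^{*-1}(p),\,t^*)] \;\le\; \ex[\Bregman_{\Phi^*}(\w^*\cdot\x,\,t^*)] + \alpha.
\]

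The third step uses the standard inequalities connecting Bregman divergences of $1$-Lipschitz monotone maps to $\ell_2$ distances. On the left, $1$-smoothness of $\Phi^*$ gives the lower bound $\Bregman_{\Phi^*}(a,b)\ge \tfrac12(\sigma^*(a)-\sigma^*(b))^2$, so $\tfrac12\ex[(p-y^*)^2]$ lower-bounds the LHS. On the right, monotonicity of $\sigma^*$ yields the integral upper bound $\Bregman_{\Phi^*}(a,b)\le (\sigma^*(a)-\sigma^*(b))(a-b)$, whence by Cauchy--Schwarz
\[
    \ex[\Bregman_{\Phi^*}(\w^*\cdot\x,\,t^*)] \;\le\; \sqrt{\ex[(h^*-y^*)^2]}\cdot\sqrt{\ex[(\w^*\cdot\x-t^*)^2]} \;\le\; \sqrt{\opt}\cdot\sqrt{\ex[(\w^*\cdot\x-t^*)^2]}.
\]
Chaining these, together with the noise decomposition $\ex[(p-y)^2]=\ex[(p-y^*)^2]+\ex[\mathrm{Var}(y\mid\x)]$ and the fact that $\ex[\mathrm{Var}(y\mid\x)]\le\opt$, yields $\elltwoerror(p)\le O(\sqrt{\opt}\cdot\sqrt{\ex[(\w^*\cdot\x-t^*)^2]})+O(\opt)+O(\alpha)$. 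Provided $\ex[(\w^*\cdot\x-t^*)^2]=O(B^2)$, this matches the target $O(B\sqrt{\opt})+\epsilon$ (the $O(\opt)$ term is absorbed into $O(B\sqrt{\opt})$ when $\opt\le B^2$, and the result is trivial otherwise since outputting the constant $1/2$ already gives error $\le 1/4$).

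The main obstacle I foresee is this last estimate on $\ex[(\w^*\cdot\x-t^*)^2]$. The linear piece has second moment at most $B^2\ex[\|\x\|_2^2]$ by the distributional assumption, but the ``Bayes-optimal pre-activation'' $t^*=\sigma^{*-1}(y^*)$ need not be bounded, since a $1$-Lipschitz monotone $\sigma^*$ can have arbitrarily slow-growing inverse (e.g., sigmoid, whose inverse blows up near $0$ and $1$). I would address this either by restricting the effective domain of $\sigma^*$ to a compact interval chosen using tail bounds on $\w^*\cdot\x$ (so that $\sigma^*$ acts as an invertible map onto a sub-interval of $[0,1]$), or by a regularization trick, e.g., replacing $\sigma^*$ with $\sigma^*+\eta\,\mathrm{id}$ for a small $\eta$ to be optimized, which enforces a Lipschitz inverse at the cost of a controllable additive error.
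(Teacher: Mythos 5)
Your high-level plan coincides with the paper's: obtain a single predictor from \cite{GopalanHKRW23} that is simultaneously near-optimal for all matching losses against $\{\w\cdot\x:\|\w\|_2\le B\}$, convert matching-loss optimality into squared-error optimality via a Bregman-to-$\ell_2$ comparison, and handle general $1$-Lipschitz activations by regularizing to $\phi=\sigma^*+\eta\,\mathrm{id}$. However, the step you flag as the main obstacle is a genuine gap, and neither of your proposed fixes closes it in the form you need. The issue is quantitative and comes from the two conflicting roles of $\eta$. After regularizing, your Cauchy--Schwarz bound reads $\sqrt{\ex[(\phi(\w^*\cdot\x)-y^*)^2]}\cdot\sqrt{\ex[(\w^*\cdot\x-t^*)^2]}$ with $t^*=\phi^{-1}(y^*)$. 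To keep the first factor at $O(\sqrt{\opt})$ you must take $\eta\lesssim\sqrt{\opt}/B$ (otherwise the $\eta\,\w^*\cdot\x$ term contributes $\eta^2B^2\dparam\gg\opt$). But then $\phi^{-1}$ is only $(1/\eta)$-Lipschitz with $1/\eta\gtrsim B/\sqrt{\opt}$, and the only a priori bound on $t^*$ is of order $1/\eta$; this is attained, e.g., when the range of $\sigma^*$ misses part of $[0,1]$ and $y^*$ lies outside it. So the second factor is $\Omega(B/\sqrt{\opt})$ in the worst case, the product stalls at an additive $\Theta(B)$ (vacuous, since the error is at most $1$), and no choice of $\eta$ recovers $O(B\sqrt{\opt})$ from your displayed inequalities plus a worst-case moment bound on $t^*$. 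The clipping variant has the same defect ($y^*$ can leave the clipped range, and with only Chebyshev tails the excursions are not cheap), i.e., the estimate ``$\ex[(\w^*\cdot\x-t^*)^2]=O(B^2)$'' that your conclusion hinges on is simply false in general.

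The repair --- and this is exactly what the paper's Lemma~\ref{lemma:surrogate-squared-pointwise} and Theorem~\ref{corollary:linear-surrogate-squared-expected} accomplish --- is to use bi-Lipschitzness \emph{pointwise} rather than through an a priori bound on $t^*$: since $\phi$ has slope at least $\eta$, one has $|\w^*\cdot\x-t^*|\le\frac{1}{\eta}\,|\phi(\w^*\cdot\x)-y^*|$ pointwise, hence
\[
\ex\bigl[\Bregman_{\Phi_\phi}(\w^*\cdot\x,\,t^*)\bigr]\;\le\;\frac{1}{\eta}\,\ex\bigl[(\phi(\w^*\cdot\x)-y^*)^2\bigr]\;\le\;\frac{2\opt}{\eta}+2\eta B^2\dparam,
\]
which, optimized at $\eta\approx\sqrt{\opt}/(B\sqrt{\dparam})$, gives $O(B\sqrt{\dparam\,\opt})$; equivalently, sandwich the Bregman divergence of the regularized pair between $\tfrac{1}{2(1+\eta)}$ and $\tfrac{1}{2\eta}$ times the squared distance in \emph{prediction} space, which removes $t^*$ (and all range/invertibility concerns about $\sigma^*$) from the argument entirely, as in Lemma~\ref{lemma:general-activations}. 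Two smaller points: the omniprediction theorem you invoke does not apply directly to arbitrary $1$-Lipschitz monotone activations --- it requires the link $\phi^{-1}$ to satisfy boundedness conditions, which the paper verifies for the regularized activation by a Chebyshev argument showing $\sigma^*$ must take a value in $[-1,2]$ at some $|s|\le 2\dparam B^2$ unless $\opt$ is already a constant; and the relevant second-moment bound is directional ($\ex[(\w^*\cdot\x)^2]\le B^2\dparam$), not $B^2\ex[\|\x\|_2^2]$.
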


This result provides a guarantee comparable to that of the Isotron algorithm \cite{KalaiS09,kakade2011efficient} but for the challenging agnostic setting rather than the realizable setting (where $\opt(\simclass_B,\Djoint)= 0$). Moreover, Isotron's guarantees require the distribution to be supported on the unit ball, whereas we only require a mild second moment condition.

Minimizing the squared error in the agnostic setting is a standard benchmark in learning theory, but it is often useful to simplify the problem by considering an alternative error function, tailored to the specific problem at hand. In this sense, we can still find the GLM that is closest to the ground truth even in the agnostic setting; the key is to define closeness using an appropriate Bregman divergence, which depends on the link function through Fenchel-Legendre convex duality. 

The problem of finding the closest model in Bregman divergence amounts to a convex program where we minimize a certain loss called the matching loss \cite{Auer95exponentiallymany}. In fact, recent work by \cite{gopalan2022omnipredictors,GopalanHKRW23} 
studying the notion of omniprediction has demonstrated that there exist efficient algorithms that minimize all of the matching losses (corresponding to monotone and bounded links) simultaneously. Their solution concept is called an omnipredictor, i.e., a single predictor that is able to compete with the best-fitting classifier in a class $\C$ as measured by a large class of losses (as opposed to just a single pre-specified loss, as is standard in machine learning). They obtain such predictors through calibrated multiaccuracy \cite{GopalanHKRW23} or multicalibration \cite{gopalan2022omnipredictors}.
Their results apply to the non-realizable setting and do not assume prior knowledge of the link function, but only provide guarantees for (simultaneous) matching loss minimization, rather than the standard squared error minimization.

We propose a simple analytic approach to transforming matching loss guarantees over the class of linear functions to squared error guarantees over the class of GLMs with link function that corresponds to the matching loss at hand. Our generic transformation, coupled with the omniprediction results from \cite{GopalanHKRW23}, yields our main result on agnostically learning SIMs. We thus obtain a best of all worlds statement: we do not need to know the link function, but we can always compete with the best SIM model in terms of the squared loss.
At the heart of our approach are distortion inequalities relating matching losses to $\ell_p$ losses that we believe may be of independent interest. 

We first prove strong versions of such inequalities for matching losses arising from bi-Lipschitz link functions, and obtain our results for general Lipschitz activations by carefully approximating them using bi-Lipschitz activations. In particular, if we let $\opt(\glmclass_{u^{-1},B})$ denote the optimal value of $\elltwoerror(h)$ over all GLMs of the form $\x \mapsto u^{-1}(\w\cdot \x)$, where $\|\w\|_2\le B$, we obtain the following result about bi-Lipschitz activations (including, for example, the Leaky ReLU activation).

\begin{theorem}[Informal, see Theorem \ref{corollary:linear-surrogate-squared-expected}]
    Let $\link:\R\to\R$ be a bi-Lipschitz invertible link function. Then, any predictor $p:\R^d\to\R$ that is an $\epsilon$-approximate minimizer of the population matching loss that corresponds to $u$, with respect to a distribution $\Djoint$ over $\R^d\times[0,1]$ satisfies
    \[
        \elltwoerror(p) \le O\big(\opt(\glmclass_{u^{-1},B})\big) + O(\epsilon)
    \]
\end{theorem}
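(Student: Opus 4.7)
The strategy is to establish a two-sided distortion inequality between the matching loss associated to $\link$ and the squared loss, and then combine this equivalence with the $\epsilon$-matching-loss optimality of $p$ via a short chain of inequalities. Writing $\sigma = \link^{-1}$ for the activation and letting $\Psi$ be the convex antiderivative with $\Psi' = \sigma$, the matching loss admits the Bregman representation $\ell_{\link}(z, y) = \Bregman_\Psi(z, \link(y))$, where $z$ is the preactivation. The bi-Lipschitz hypothesis $\alpha \le \link' \le \beta$ is equivalent to $\Psi$ being simultaneously $(1/\beta)$-strongly convex and $(1/\alpha)$-smooth, so a second-order Taylor expansion sandwiches $\Bregman_\Psi(z, \link(y))$ between constant multiples of $(z - \link(y))^2$. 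Chaining this through the bi-Lipschitz bounds on $\sigma$ yields a pointwise distortion
\[
    c_1 (\sigma(z) - y)^2 \le \ell_{\link}(z, y) \le c_2 (\sigma(z) - y)^2,
\]
with $c_1, c_2 > 0$ depending only on $\alpha, \beta$.

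Given the sandwich, the rest of the proof is a short chain. Let $h^* = \link^{-1}(\w^* \cdot \x)$ realize $\opt(\glmclass_{\link^{-1}, B})$, so $\elltwoerror(h^*) = \opt(\glmclass_{\link^{-1}, B})$. The matching loss attributed to $h^*$ coincides with $\ell_{\link}(\w^* \cdot \x, y)$, so the $\epsilon$-approximate matching-loss optimality of $p$ against the linear class gives
\[
    \ex_{\Djoint}[\ell_{\link}(p(\x), y)] \le \ex_{\Djoint}[\ell_{\link}(\w^* \cdot \x, y)] + \epsilon.
\]
Applying the lower sandwich on the left and the upper sandwich on the right (with $\sigma \circ p$ in the role of the output-level predictor in $\elltwoerror$),
\[
    c_1 \, \elltwoerror(p) \le c_2 \, \elltwoerror(h^*) + \epsilon,
\]
which rearranges to $\elltwoerror(p) \le (c_2/c_1)\opt(\glmclass_{\link^{-1}, B}) + \epsilon/c_1 = O(\opt(\glmclass_{\link^{-1}, B})) + O(\epsilon)$, as desired.

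The main obstacle is the distortion sandwich. Although it reduces to a one-line Taylor expansion in the bi-Lipschitz regime, some care is needed to track how Lipschitz constants propagate between $\link$, $\sigma$, and $\Psi$, and to ensure that $y$ and $p(\x)$ lie in a range where the bi-Lipschitz hypothesis applies uniformly. Once the sandwich is in hand, the rest of the argument shows that matching-loss minimization over linear preactivations is equivalent, up to constants depending only on the bi-Lipschitz parameters, to squared-loss minimization over the GLM class --- exactly the reduction from omniprediction to agnostic GLM learning on which the paper's main result is built. Extending beyond the bi-Lipschitz regime to general Lipschitz activations can only salvage a one-sided version of this inequality, which is presumably the source of the weaker $O(B\sqrt{\opt})$-type bound in the informal main theorem.
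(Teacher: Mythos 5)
Your proposal takes essentially the same route as the paper's proof: a pointwise strong-convexity/smoothness (second-order Taylor) sandwich of the Bregman divergence between $\tfrac{1}{2\beta}(y-p)^2$ and $\tfrac{1}{2\alpha}(y-p)^2$ (Lemma \ref{lemma:surrogate-squared-pointwise}), chained through the $\epsilon$-approximate matching-loss optimality with the optimal GLM plugged in as the linear preactivation $\w^*\cdot\x$ (Theorem \ref{theorem:surrogate-squared-expected}). The only slip is a normalization issue: with the paper's definition $\ell_{\g}(y,t)=\int_0^t(\g'(\tau)-y)\,d\tau$ one has $\ell_{\g}(y,z)=\Bregman_{\Psi}(z,\f'(y))+\ell_{\g}(y,\f'(y))$, so your two-sided sandwich holds for $\ell_{\g}(y,z)-\ell_{\g}(y,\f'(y))$ rather than for $\ell_{\g}(y,z)$ itself (which can be negative, e.g.\ $\g'(t)=t$ at $z=\f'(y)=y$ gives $-y^2/2$); since this baseline $\ell_{\g}(y,\f'(y))=-\f(y)$ is independent of the predictor, it cancels on both sides of your chain and the conclusion is unaffected.
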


This guarantee holds under milder distributional assumptions than are required by comparable prior work on agnostically learning GLMs \cite{frei2020agnostic,diakonikolas22learningsingleneuron}. Moreover, when we focus on distortion bounds between the logistic loss and the squared loss, we obtain a near-optimal guarantee of $\widetilde{O}(\opt(\glmclass_{u^{-1},B}))$ for logistic regression, when $u$ is the logit link function (i.e., when $\glmclass_{u^{-1},B}$ is the class of sigmoid neurons). 

\begin{theorem}[Informal, see Theorem \ref{theorem:logistic-squared-expected}]
    Let $\link(t) = \ln(\frac{t}{1-t})$. Then, any predictor $p:\R^d\to \R$ that is an $\epsilon$-approximate minimizer of the population logistic loss, with respect to a distribution $\Djoint$ over $\R^d\times[0,1]$ whose marginal on $\R^d$ has subgaussian tails in every direction satisfies
    \[
        \elltwoerror(p) \le \widetilde{O}\big(\opt(\glmclass_{u^{-1},B})\big) + O(\epsilon)
    \]
\end{theorem}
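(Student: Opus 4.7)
The plan is to combine Pinsker's inequality (to relate the squared loss to the logistic loss) with a truncation argument that leverages the subgaussian marginal (to bound the optimal logistic loss in terms of the optimal squared loss).

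First, I would apply Pinsker's inequality for Bernoulli distributions, $(y-\sigma(z))^2 \le \tfrac{1}{2} D_{\mathrm{KL}}(y\,\|\,\sigma(z)) = \tfrac{1}{2}(\ell_{\mathrm{log}}(z,y) - H(y))$, pointwise in $(\x,y)$, and take expectations. Combined with the $\epsilon$-near-optimality of $p$ for the population logistic loss (with competitor class the linear predictors of norm at most $B$), this gives
\[
\elltwoerror(p) \;\le\; \tfrac{1}{2}\min_{\|\w\|_2 \le B} \ex[D_{\mathrm{KL}}(y\,\|\,\sigma(\w\cdot\x))] + O(\epsilon).
\]
It then suffices to upper bound this minimum KL by $\widetilde O(\opt(\glmclass_{u^{-1},B}))$.

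Next, I would use the optimal $\ell_2$-minimizer $\w^*$ (which satisfies $\|\w^*\|_2\le B$ and $\ex[(y-\sigma(\w^*\cdot\x))^2]=\opt$) as a witness, and control $\ex[D_{\mathrm{KL}}(y\,\|\,\sigma(\w^*\cdot\x))]$ by splitting the expectation at a threshold $T$ on $|\w^*\cdot\x|$. On the ``good'' region $\{|\w^*\cdot\x|\le T\}$, the sigmoid derivative $\sigma(t)(1-\sigma(t))$ is bounded below by $\Omega(e^{-T})$, and the chi-squared upper bound $D_{\mathrm{KL}}(y\,\|\,q)\le(y-q)^2/(q(1-q))$ yields a contribution of $O(e^T \cdot \opt)$. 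On the tail region $\{|\w^*\cdot\x|>T\}$, I would use the worst-case bound $D_{\mathrm{KL}}(y\,\|\,\sigma(z))\le O(1+|z|)$ together with the subgaussian tail estimate $\Pr[|\w^*\cdot\x|>T]\le 2e^{-T^2/(2B^2)}$ (valid since $\w^*\cdot\x$ is $B$-subgaussian by assumption) to bound the contribution by $O((T+B)\,e^{-T^2/(2B^2)})$. Choosing $T$ to balance the two terms then yields the claimed $\widetilde O(\opt)$ bound, with $\widetilde O$ absorbing polylogarithmic-in-$1/\opt$ and $\mathrm{poly}(B)$ factors.

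The main obstacle is extracting a genuinely polylogarithmic (rather than sub-polynomial but super-polylog) distortion: the naive choice $T=\Theta(\log(1/\opt))$ would give $e^T\cdot\opt=\Theta(1)$, which is useless. Achieving the advertised $\widetilde O(\opt)$ likely requires a sharper \emph{local} distortion inequality than the uniform chi-squared bound---for example, one exploiting the Bregman-divergence/self-concordance structure of the logistic loss on the region where the sigmoid is well-conditioned---so that the optimal $T$ grows only polylogarithmically. A secondary technical step is to verify that the tail expectation is controlled cleanly using only the subgaussian assumption (rather than the stronger boundedness or anti-concentration assumptions used in prior work on agnostic GLM learning).
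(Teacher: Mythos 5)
Your derivation is essentially the paper's own proof of Theorem~\ref{theorem:logistic-squared-expected}: Pinsker's inequality on the lower side, a $\chi^2$-type upper bound $\kl(y\|q)\le O\bigl((y-q)^2/\min\{q,1-q\}\bigr)$ applied at the witness $\w^*$, truncation at a threshold on $|\w^*\cdot\x|$, and a subgaussian tail estimate; your treatment of the tail region via $\kl(y\|\sigma(z))\le |z|+O(1)$ is a harmless (slightly cleaner) variant of the paper's bound on $\ex[e^{|\w^*\cdot\x|}\ind\{|\w^*\cdot\x|>r\}]$. The ``main obstacle'' you raise, however, is not an obstacle for the theorem as actually stated: with subgaussian tails the right threshold is $T=\Theta\bigl(\wnorm\sqrt{\log(1/\opt_\g)}\bigr)$ (so that the tail $e^{-T^2/(2\wnorm^2)}$ is already of order $\opt_\g$), not $T=\Theta(\log(1/\opt_\g))$, and then the good-region term is $\opt_\g\cdot\exp\bigl(O(\wnorm\sqrt{\log(1/\opt_\g)})\bigr)$. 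This is exactly the paper's formal bound $C\,\opt_\g\exp\bigl(\wnorm^2+\sqrt{\wnorm^2\log(1/\opt_\g)}\bigr)+2\epsilon$; the $\widetilde O$ in the informal statement hides this subpolynomial-in-$1/\opt_\g$ factor (which is smaller than any power $\opt_\g^{-\gamma}$ but is \emph{not} polylogarithmic), so no sharper local distortion inequality is needed. Correspondingly, your earlier sentence asserting that the balancing yields $\widetilde O$ absorbing only polylogarithmic-in-$1/\opt$ and $\poly(B)$ factors overstates the result: the distortion obtained (and claimed by the paper) is $\exp\bigl(\Theta(\wnorm\sqrt{\log(1/\opt_\g)})\bigr)$.
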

While our error guarantee is weaker the one of \cite{diakonikolas22learningsingleneuron}, we do not make the anti-concentration assumptions their results require.
A natural question is to ask if our guarantees are near-optimal, e.g., whether we can obtain a guarantee of the form $\elltwoerror(p)\le \opt(\simclass) + \epsilon$. However, there is strong evidence that such results cannot be obtained using efficient algorithms \cite{goel2019time,diakonikolas2020near,goel2020statistical,diakonikolas21theoptimality}. Adapting a result due to \cite{diakonikolas22hardness}, we show in Section~\ref{sec:lower-bound} that one cannot avoid a dependence on the norm bound $B$ in our main result, Theorem~\ref{informal-theorem:main-result}.

\subsection{Background and Relation to Prior Work}

We note that matching losses have been studied in various previous works either implicitly \cite{kakade2011efficient} or explicitly \cite{Auer95exponentiallymany,diakonikolas2020approximation,GopalanHKRW23} and capture various fundamental algorithms like logistic and linear regression \cite{mccullagh1984glms,agresti2015foundations}. However, to the best of our knowledge, our generic and direct approach for transforming matching loss guarantees to squared error bounds, has not been explored previously. Furthermore, our results do not depend on the specific implementation of an algorithm, but only on the matching loss bounds achieved by its output. In this sense, we provide new agnostic error guarantees for various existing algorithms of the literature. 
For example, our results imply new guarantees for the GLMtron algorithm of \cite{kakade2011efficient} in the agnostic setting, since GLMtron can be viewed as performing gradient descent (with unit step size) on the matching loss corresponding to a specified link function.

Matching losses over linear functions are also linked to the Chow parameters \cite{o2008chow} through their gradient with respect to $\w$, as observed by \cite{diakonikolas2020approximation}. In fact, the norm of the matching loss gradient is also linked to multiaccuracy, a notion that originates to fairness literature \cite{hebert18multicalibration,kim2019multiaccuracy}. A stationary point $\w$ of a matching loss that corresponds to a GLM with link $u$ turns out to be a multiaccurate predictor $p(\x) = u^{-1}(\w\cdot \x)$, i.e., a predictor such that for all $i \in [d]$, $\ex[\x_i (y-p(\x))] = 0$. 
The work of \cite{gopalan2022omnipredictors,GopalanHKRW23} on omnipredictors presents a single predictor that is better than any linear model $\w \cdot \x$ for every matching loss. The results of \cite{gopalan2022omnipredictors} show that a multicalibrated predictor (with respect to the features $\x_i$) is an omnipredictor for all convex losses, whereas \cite{GopalanHKRW23} shows that a simpler condition of calibrated multiaccuracy suffices for matching losses that arise from GLMs. 
In view of the relationship between multiaccuracy and the gradient of the matching loss, our results show that, while multiaccuracy implies bounds on agnostically learning GLMs, the additional guarantee of calibration is sufficient for agnostically learning all SIMs. 

The work of \cite{shalev2011learning} showed strong agnostic learning guarantees in terms of the absolute error (rather than the squared error) of the form $\opt + \epsilon$ for a range of GLMs, but their work incurs an exponential dependence on the weight norm $B$. For the absolute loss, we obtain a bound of the form $B\,\opt \log(1/\opt)$ for logistic regression (see Theorem \ref{theorem:logistic-absolute-expected}). In more recent years, the problem of agnostically learning GLMs has frequently also been phrased as the problem of agnostically learning single neurons (with a known activation). For the ReLU activation, work by \cite{goel2017reliably} showed an algorithm achieving error $\opt + \epsilon$ in time $\poly(d)\exp(1/\epsilon)$ over marginals on the unit sphere, and \cite{diakonikolas2020approximation} showed an algorithm achieving error $O(\opt) + \epsilon$ in fully polynomial time over isotropic log-concave marginals. The work of \cite{frei2020agnostic,diakonikolas22learningsingleneuron} both show guarantees for learning general neurons (with known activations) using the natural approach of running SGD directly on the squared loss (or a regularized variant thereof). \cite{frei2020agnostic} achieves error $O(\opt)$ for strictly increasing activations and $O(\sqrt{\opt})$ for the ReLU activation over bounded marginals, while \cite{diakonikolas22learningsingleneuron} proved an $O(\opt)$ guarantee for a wide range of activations (including the ReLU) and over a large class of structured marginals. 

In terms of lower bounds and hardness results for this problem, the work of \cite{goel2019time,diakonikolas2020near,goel2020statistical,diakonikolas21theoptimality,diakonikolas22hardness} has established superpolynomial hardness even for the setting of agnostically learning single ReLUs over Gaussian marginals.

\paragraph{Limitations and directions for future work.} While we justify a certain dependence on the norm bound $B$ in our our main result on agnostically learning SIMs, we do not provide tight lower bounds corresponding to Theorem~\ref{informal-theorem:main-result}. An important direction for future work is to tightly characterize the optimal bounds achievable for this problem, as well as to show matching algorithms.

\subsection{Preliminaries}
For the following, $(\x,y)$ is used to denote a labelled sample from a distribution $\Djoint$ over $\R^d\times \mY$, where $\mY$ denotes the interval $[0,1]$ unless it is specified to be the set $\{0,1\}$. We note that, although we provide results for the setting where the labels lie within $[0,1]$, we may obtain similar results for any bounded label space. We use $\pr_\Djoint$ (resp. $\ex_\Djoint$) to denote the probability (resp. expectation) over $\Djoint$ and, similarly, $\pr_S$ (resp. $\ex_S$) to denote the corresponding empirical quantity over a set $S$ of labelled examples. Throughout the paper, we will use the term differentiable function to mean a function that is differentiable except on finitely many points.

Our main results will assume the following about the marginal distribution on $\R^d$.
\begin{definition}[Bounded moments]
    For $\dparam>0$ and $k\in\N$, we say that a distribution $\Dmarginal$ over $\R^d$ has $\dparam$-bounded $2k$-th moments if for any $\vv\in \S^{d-1}$ we have $\ex_{\x\sim \Dmarginal}[(\vv\cdot \x)^{2k}] \le \dparam$.
\end{definition}
For a concept class $\C:\R^d\to \R$, we define $\opt(\C,\Djoint)$ to be the minimum squared error achievable by a concept $c:\R^d\to \R$ in $\C$ with respect to the distribution $\Djoint$.

We will also provide results that are specific to the sigmoid activation and work under the assumption that the marginal distribution is sufficiently concentrated.
\begin{definition}[Concentrated marginals]\label{definition:concentration}
    For $\dparam>0$ and $\expon$, we say that a distribution $\Dmarginal$ over $\R^d$ is $(\dparam,\expon)$-concentrated if for any $\vv\in \S^{d-1}$ and $r\ge 0$ we have $\pr_{\x\sim \Dmarginal}[|\vv\cdot \x|\ge r] \le \dparam \cdot \exp({-r^\expon})$.
\end{definition}

\begin{definition}[Fenchel-Legendre pairs]
We call a pair of functions $(\f,\g)$ a Fenchel-Legendre pair if the following conditions hold.
\begin{enumerate}
    \item\label{property:pairfg1} $\g':\R\to\R$ is continuous, non-decreasing, differentiable and $1$-Lipschitz with range $\range(\g')\supseteq(0,1)$ and $\g(t) = \int_{0}^t \g'(\tau) \; d\tau$, for any $t\in\R$.
    \item\label{property:pairfg2} $\f:\range(\g')\to \R$ is the convex conjugate (Fenchel-Legendre transform) of $\g$, i.e., we have $\f(r) = \sup_{t\in\R} r\cdot t - \g(t)$ for any $r\in\range(\g')$.
\end{enumerate}
\end{definition}
For such pairs of functions, the following are true for $r\in \range(\g')$ and $t\in \range(\f')$ (note that $\range(\f')$ is not necessarily $\R$ when $\g'$ is not invertible).
\begin{align}
    \g'(\f'(r)) &= r \,\text{ and }\; \f(r) = r\f'(r)-\g(\f'(r)), \text{ for } r\in \range(\g') \label{equation:fenchel-inverse-gof}\\
    \f'(\g'(t)) &= t \;\text{ and }\; \g(t) = t\g'(t)-\f(\g'(t)), \text{ for } t\in \range(\f')\label{equation:fenchel-inverse-fog-plus}
\end{align}
Note that $\g'$ will be used as an activation function for single neurons and $\f'$ corresponds to the unknown link function of a SIM (or the known link function of a GLM). We say that $\g'$ is bi-Lipschitz if for any $t_1 < t_2 \in\R$ we have that ${(\g'(t_2)-\g'(t_1))}/{(t_2-t_1)}\in [\alpha , \beta ]$. If $\g'$ is $[\alpha,\beta]$ bi-Lipschitz, then $\f'$ is $[\frac{1}{\beta},\frac{1}{\alpha}]$ bi-Lipschitz. However, the converse implication is not necessarily true when $\g'$ is not strictly increasing.

\begin{definition}[Matching Losses]
For a non-decreasing and Lipschitz activation $\g':\R\to\R$, the matching loss $\ell_{\g}:\mY\times\R\to\R$ is defined pointwise as follows:
\[
    \ell_{\g}(y,t) = \int_{0}^{t} \g'(\tau) - y\; d\tau,
\] where $g(t) = \int_0^{t} \g'$. The function $\ell_{\g}$ is convex and smooth with respect to its second argument. 
The corresponding population matching loss is
\begin{equation}\label{equation:surrogate-loss}
    \L_{\g}(c\;;\Djoint) = \ex_{(\x,y)\sim\Djoint}\Bigr[\ell_{\g}(y,c(\x))\Bigr]
\end{equation}
In Equation \eqref{equation:surrogate-loss}, $c:\R\to \R$ is some concept and $\Djoint$ is some distribution over $\R^d\times[0,1]$.
In the specific case where $c$ is a linear function, i.e., $c(\x) = \w\cdot \x$, for some $\w\in\R^d$, then we may alternatively denote $\L_{\g}(c\;;\Djoint)$ with $\L_{\g}(\w\,;\Djoint)$.
\end{definition}
We also define the Bregman divergence associated with $\f$ to be $\Bregman_{\f}(q, r) = f(q) - f(r) - (q - r){f'}(r), $ for any $q,r\in\range(g')$. Note that $\Bregman_{\f}(q, r) \ge 0$ with equality iff $q=r$.

\begin{definition}[SIMs and GLMs as Concept Classes]
    For $\wnorm>0$, we use $\simclass_B$ to refer to the class of all SIMs of the form $\x \mapsto g'(\w \cdot \x)$ where $\|\w\|_2 \leq B$ and $g' : \R \to \R$ is an arbitrary $1$-Lipschitz monotone activation that is differentiable (except possibly at finitely many points). We define $\glmclass_{g', B}$ similarly except for the case where $g'$ is fixed and known.
\end{definition}

We also define the notion of calibrated multiaccuracy that we need to obtain omnipredictors in our context.
\begin{definition}[Calibrated Multiaccuracy]
    A predictor $p : \R \to [0, 1]$ is called $\epsilon$-multiaccurate if for all $i \in [d]$, $|\ex[\x_i (y-p(\x))]| \leq \epsilon$. It is called $\epsilon$-calibrated if $|\ex_{p(\x)} \ex_{y | p(\x)}[y - p(\x)]| \leq \epsilon$.
\end{definition}

\section{Distortion Bounds for the Matching Loss}\label{section:distortion-bounds}

In this section, we propose a simple approach for bounding the squared error of a predictor that minimizes a (convex) matching loss, in the agnostic setting. We convert matching loss bounds to squared loss bounds in a generic way, through appropriate pointwise distortion bounds between the two losses. In particular, for a given matching loss $\L_\g$, we transform guarantees on $\L_\g$ that are competitive with the optimum linear minimizer of $\L_\g$ to guarantees on the squared error that are competitive with the optimum GLM whose activation ($\g'$) depends on the matching loss at hand.

We now provide the main result we establish in this section.

\begin{theorem}[Squared Error Minimization through Matching Loss Minimization]\label{corollary:linear-surrogate-squared-expected}
    Let $\Djoint$ be a distribution over $\R^d\times [0,1]$, let $0<\alpha\le \beta$ and let $(\f,\g)$ be a Fenchel-Legendre pair such that $g':\R\to\R$ is $[\alpha, \beta]$ bi-Lipschitz. Suppose that for a predictor $p:\R^d \to \range(\g')$ we have
    \begin{equation}\label{equation:linear-assumption-theorem-surrogate-squared-expected}
        \L_{\g}(\f'\circ p\,;\Djoint) \le \min_{\|\w\|_2 \leq \wnorm}\L_{\g}(\w\,;\Djoint) + \epsilon
    \end{equation}
    Then we also have: $\elltwoerror(p) \le \frac{\beta}{\alpha} \cdot \opt(\glmclass_{g', \wnorm}) + 2\beta \epsilon$.
\end{theorem}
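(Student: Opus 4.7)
The strategy is to convert matching-loss guarantees into squared-error guarantees by passing through Bregman divergences, exploiting the Fenchel duality between $\g$ and $\f$. A direct expansion combined with the identity $y = \g'(\f'(y))$ from \eqref{equation:fenchel-inverse-gof} gives the pointwise equality
$$\ell_{\g}(y, t) - \ell_{\g}(y, \f'(y)) = \Bregman_{\g}(t, \f'(y)) = \Bregman_{\f}(y, \g'(t)),$$
where the second equality is the standard duality identity $\Bregman_{\g}(s, t) = \Bregman_{\f}(\g'(t), \g'(s))$ for Bregman divergences of Fenchel conjugate pairs. Thus the matching-loss excess over its pointwise minimum admits two useful dual descriptions.

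Since $\g'$ is $[\alpha, \beta]$ bi-Lipschitz, the function $\g$ is $\alpha$-strongly convex and $\beta$-smooth, and hence its conjugate $\f$ is $1/\beta$-strongly convex and $1/\alpha$-smooth. This yields the sandwich
$$\frac{1}{2\beta}(a - b)^2 \le \Bregman_{\f}(a, b) \le \frac{1}{2\alpha}(a - b)^2$$
on the relevant domain.

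To finish, I would apply the two sides of this sandwich in opposite directions. Taking $t = \f'(p(\x))$ so that $\g'(t) = p(\x)$, which is valid because $p(\x) \in \range(\g')$, the lower bound with $a = y$, $b = p(\x)$ and an expectation over $\Djoint$ gives $\frac{1}{2\beta}\elltwoerror(p) \le \L_{\g}(\f' \circ p\,; \Djoint) - \ex[\ell_{\g}(y, \f'(y))]$. Letting $\w^*$ be a minimizer witnessing $\opt(\glmclass_{\g', \wnorm})$, taking $t = \w^* \cdot \x$ and applying the upper bound with $a = y$, $b = \g'(\w^* \cdot \x)$ gives $\L_{\g}(\w^*\,; \Djoint) - \ex[\ell_{\g}(y, \f'(y))] \le \frac{1}{2\alpha}\opt(\glmclass_{\g', \wnorm})$. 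Chaining these two through the hypothesis \eqref{equation:linear-assumption-theorem-surrogate-squared-expected}, using $\w^*$ as a witness to the minimum, and rearranging yields
$$\elltwoerror(p) \le \frac{\beta}{\alpha}\opt(\glmclass_{\g', \wnorm}) + 2\beta\epsilon.$$

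The arithmetic is only a few lines, so the conceptual heart of the argument lies in the Bregman duality identity and the strong-convexity/smoothness transfer under Fenchel conjugation (both standard). The one genuine technicality is that $\f'(y)$ must be well-defined for $y$ in the support of the label distribution; this is immediate in the bi-Lipschitz setting, since $\g'$ is then strictly increasing with $\range(\g') \supseteq (0,1)$, and the only boundary cases $y \in \{0,1\}$ can be handled by a continuity argument.
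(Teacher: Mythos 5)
Your proposal is correct and follows essentially the same route as the paper: the identity $\ell_{\g}(y,\f'(p)) - \ell_{\g}(y,\f'(y)) = \Bregman_{\f}(y,p)$ together with the sandwich $\frac{1}{2\beta}(y-p)^2 \le \Bregman_{\f}(y,p) \le \frac{1}{2\alpha}(y-p)^2$ is exactly the paper's Lemma \ref{lemma:surrogate-squared-pointwise}, and your two-sided application (lower bound at $p(\x)$, upper bound at $\g'(\w^*\cdot\x)$ with $\f'(\g'(\w^*\cdot\x)) = \w^*\cdot\x$, chained through the hypothesis) mirrors the paper's Theorem \ref{theorem:surrogate-squared-expected}. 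The only cosmetic difference is that you invoke the standard strong-convexity/smoothness transfer under conjugation for the sandwich, whereas the paper proves it directly via a second-order Taylor argument handling finitely many non-differentiable points.
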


The proof of Theorem \ref{corollary:linear-surrogate-squared-expected} is based on the following pointwise distortion bound between matching losses corresponding to bi-Lipschitz link functions and the squared distance.

\begin{lemma}[Pointwise Distortion Bound for bi-Lipschitz link functions]\label{lemma:surrogate-squared-pointwise}
    Let $0<\alpha\le \beta$ and let $(\f,\g)$ be a Fenchel-Legendre pair such that $\f':\range(\g')\to \R$ is $[\frac{1}{\beta},\frac{1}{\alpha}]$ bi-Lipschitz. Then for any $y, p\in\range(\g')$ we have
    \[
        \ell_{\g}(y,{\f'}(p)) - \ell_{\g}(y,{\f'}(y)) = \Bregman_{\f}(y,p) \in \left[ \frac{1}{2\beta}(y-p)^2 , \frac{1}{2\alpha} (y-p)^2 \right]
    \]
\end{lemma}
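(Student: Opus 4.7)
The plan is to establish the identity $\ell_{\g}(y,{\f'}(p)) - \ell_{\g}(y,{\f'}(y)) = \Bregman_{\f}(y,p)$ first by a direct calculation using the Fenchel--Legendre duality relations, and then to bound the Bregman divergence in terms of $(y-p)^2$ using the bi-Lipschitz property of $\f'$.

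For the identity, first note that $\ell_{\g}(y,t) = \g(t) - yt$ by definition. For $y \in \range(\g')$, the duality identity $\f(y) = y \f'(y) - \g(\f'(y))$ gives $\g(\f'(y)) = y\f'(y) - \f(y)$, and hence
\[
    \ell_{\g}(y,\f'(y)) \;=\; \g(\f'(y)) - y\f'(y) \;=\; -\f(y).
\]
Applied to $p$ in place of $y$ (using that $p \in \range(\g')$), the same duality identity yields $\g(\f'(p)) = p\f'(p) - \f(p)$, so
\[
    \ell_{\g}(y,\f'(p)) \;=\; \g(\f'(p)) - y\f'(p) \;=\; -\f(p) + (p - y)\f'(p).
\]
Subtracting the first from the second gives $\f(y) - \f(p) - (y-p)\f'(p) = \Bregman_{\f}(y,p)$, which is the required identity.

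For the quantitative bound, I would write the Bregman divergence in integral form. Since $\f$ is the antiderivative of $\f'$ on $\range(\g')$, the fundamental theorem of calculus gives
\[
    \Bregman_{\f}(y,p) \;=\; \f(y) - \f(p) - (y-p)\f'(p) \;=\; \int_{p}^{y} \bigl(\f'(s) - \f'(p)\bigr)\, ds.
\]
By the hypothesis that $\f'$ is $[\frac{1}{\beta},\frac{1}{\alpha}]$ bi-Lipschitz, for every $s$ between $p$ and $y$ the quantity $\f'(s)-\f'(p)$ has the same sign as $s-p$ and satisfies $\frac{1}{\beta}|s-p| \le |\f'(s) - \f'(p)| \le \frac{1}{\alpha}|s-p|$. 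Substituting these inequalities into the integral (splitting into the two cases $y \ge p$ and $y < p$ to keep orientations straight) gives
\[
    \frac{1}{2\beta}(y-p)^2 \;\le\; \Bregman_{\f}(y,p) \;\le\; \frac{1}{2\alpha}(y-p)^2,
\]
which completes the proof.

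The calculation is essentially routine; the only subtlety is the technical issue that the paper only assumes $\g'$ (and hence $\f'$) is differentiable except at finitely many points, so I would note that the integral representation and the bi-Lipschitz bounds still hold because $\f'$ is absolutely continuous on $\range(\g')$ (being Lipschitz with both upper and lower bounds on difference quotients), which is enough to invoke the fundamental theorem of calculus in the form used above. No single step should be a serious obstacle: the main thing to get right is the orientation of the integral in the case $y < p$ and the correct use of the duality identities \eqref{equation:fenchel-inverse-gof}--\eqref{equation:fenchel-inverse-fog-plus}.
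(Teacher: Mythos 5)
Your proof is correct. The identity $\ell_{\g}(y,\f'(p)) - \ell_{\g}(y,\f'(y)) = \Bregman_{\f}(y,p)$ is established exactly as in the paper, via the duality relations \eqref{equation:fenchel-inverse-gof}--\eqref{equation:fenchel-inverse-fog-plus}. Where you genuinely diverge is the quantitative bound: the paper invokes a second-order Taylor expansion, writing $\Bregman_{\f}(y,p) = \tfrac{1}{2}\psi''(\xi)(y-p)^2$ for a twice-differentiable surrogate $\psi$ with $\psi''\in[\tfrac1\beta,\tfrac1\alpha]$, and then needs a smoothing argument to handle the finitely many points where $\f'$ fails to be differentiable. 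You instead use the first-order integral representation $\Bregman_{\f}(y,p)=\int_p^y\bigl(\f'(s)-\f'(p)\bigr)\,ds$ and plug in the bi-Lipschitz bound on the difference quotients of $\f'$ directly. This is cleaner: it never touches $\f''$, so the non-differentiability issue and the smoothing step disappear entirely, and the bi-Lipschitz hypothesis is used in exactly the form in which it is stated. One small imprecision in your closing remark: the justification for the FTC step is not really ``absolute continuity of $\f'$'' but rather that $\f$ is convex with derivative $\f'$ on $\range(\g')$ and $\f'$ is Lipschitz (hence continuous), so $\f$ is $C^1$ there and $\f(y)-\f(p)=\int_p^y \f'(s)\,ds$ holds; with that phrasing fixed, the argument is complete, and your case split on the sign of $y-p$ handles the orientation correctly.
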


In the case that $\f'$ is differentiable on $(0,1)$, the proof of Lemma \ref{lemma:surrogate-squared-pointwise} follows from an application of Taylor's approximation theorem of degree $2$ on the function $\f$, since the Bregman divergence $\Bregman_{\f}(y,p)$ is exactly equal to the error of the second degree Taylor's approximation of $\f(y)$ around $p$ and $\f''(\xi)\in[\frac{1}{\beta},\frac{1}{\alpha}]$ for any $\xi\in\range(\g')$. The relationship between $\ell_\g$ and $\Bregman_\f$ follows from property \eqref{equation:fenchel-inverse-fog-plus}. Note that when $\g'$ is $[\alpha,\beta]$ bi-Lipschitz, then $\f'$ is $[\frac{1}{\beta},\frac{1}{\alpha}]$ bi-Lipschitz.

Theorem \ref{corollary:linear-surrogate-squared-expected} follows by applying Lemma \ref{lemma:surrogate-squared-pointwise} appropriately to bound the error of a predictor $p$ by its matching loss $\L_\g(\f'\circ p)$ and bound the matching loss of the linear function corresponding to $\w^*$ by the squared error of $\g'(\w^*\cdot \x)$, where $\g'(\w^*\cdot \x)$ is the element of $\glmclass_{\g',\wnorm}$ with minimum squared error.

Although Theorem \ref{corollary:linear-surrogate-squared-expected} only applies to bi-Lipschitz activations $\g'$, it has the advantage that the assumption it makes on $p$ corresponds to a convex optimization problem and, when the marginal distribution has certain concentration properties (for generalization), can be solved efficiently through gradient descent on the empirical loss function. As a consequence, for bi-Lipschitz activations we can obtain $O(\opt)$ efficiently under mild distributional assumptions in the agnostic setting.

\section{Agnostically Learning Single-Index Models}\label{section:learning-sims}

In this section, we provide our main result on agnostically learning SIMs. We combine the distortion bounds we established in Section \ref{section:distortion-bounds} with results from \cite{GopalanHKRW23} on omniprediction, which can be used to learn a predictor $p$ that satisfies the assumption of Theorem \ref{corollary:linear-surrogate-squared-expected} simultaneously for all bi-Lipschitz activations. By doing so, we obtain a result for all Lipschitz and non-decreasing activations simultaneously.

\begin{theorem}[Agnostically Learning SIMs]\label{theorem:main}
    Let $\Djoint$ be a distribution over $\R^d\times [0,1]$ with second moments bounded by $\dparam$. There is an algorithm that agnostically learns the class $\simclass_B$ over $\Djoint$ up to $\ell_2$ error $O(\wnorm \sqrt{\dparam}\sqrt{\opt(\simclass_\wnorm, D)}) + \epsilon$ using time and sample complexity $\poly(d, \wnorm, \dparam, \frac{1}{\epsilon})$.
\end{theorem}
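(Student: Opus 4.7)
The strategy is to combine the distortion bound of Theorem~\ref{corollary:linear-surrogate-squared-expected} with the omnipredictor machinery from \cite{GopalanHKRW23}, using a bi-Lipschitz regularization of the unknown optimal activation. First I would run the calibrated multiaccuracy / omniprediction algorithm of \cite{GopalanHKRW23} on the features $\x$ with accuracy parameter $\epsilon' = \Theta(\epsilon)$, spending $\poly(d, B, \lambda, 1/\epsilon)$ samples and time, to produce a single fixed predictor $p : \R^d \to [0,1]$. By the omniprediction guarantee, this $p$ satisfies, \emph{simultaneously} for every $1$-Lipschitz monotone activation $g'$ with associated Fenchel--Legendre pair $(f,g)$,
\[
    \L_{g}(f' \circ p \,;\, \Djoint) \;\le\; \min_{\|\w\|_2 \le B} \L_{g}(\w \,;\, \Djoint) + \epsilon'.
\]
Note that $p$ is produced once and does not depend on any choice of activation; the activation is only used in the analysis.

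Next, fix (in the analysis) the optimal SIM $g_\star(\w^\star \cdot \x)$ achieving $\opt(\simclass_B)$, where $g_\star$ is some $1$-Lipschitz monotone activation and $\|\w^\star\|_2 \le B$. Since $g_\star$ need not be bi-Lipschitz, I cannot directly apply Theorem~\ref{corollary:linear-surrogate-squared-expected}. The key trick is to regularize: for a parameter $\alpha \in (0,1]$ to be chosen, define
\[
    \tilde g_\alpha(t) \;=\; \alpha\, t + (1-\alpha)\, g_\star(t),
\]
which is monotone, $1$-Lipschitz, and $[\alpha, 1]$ bi-Lipschitz, so Theorem~\ref{corollary:linear-surrogate-squared-expected} applies to $\tilde g_\alpha$ (with $\beta = 1$). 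Since the same $p$ satisfies the hypothesis of that theorem for $\tilde g_\alpha$, we obtain
\[
    \elltwoerror(p) \;\le\; \tfrac{1}{\alpha}\, \opt(\glmclass_{\tilde g_\alpha, B}) + 2\epsilon'.
\]

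Now I would bound $\opt(\glmclass_{\tilde g_\alpha, B})$ by plugging in the witness $\tilde g_\alpha(\w^\star \cdot \x)$ and comparing to $g_\star(\w^\star \cdot \x)$. Using $(a+b)^2 \le 2a^2 + 2b^2$ and $\tilde g_\alpha - g_\star = \alpha(\mathrm{id} - g_\star)$,
\[
    \opt(\glmclass_{\tilde g_\alpha, B}) \;\le\; 2\,\opt(\simclass_B) + 2\alpha^{2}\, \ex\!\left[(\w^\star \cdot \x - g_\star(\w^\star \cdot \x))^2\right].
\]
The inner expectation is controlled by the bounded-moments assumption: $\ex[(\w^\star \cdot \x)^2] \le \lambda B^2$, while $\ex[g_\star(\w^\star\cdot\x)^2] \le 2\ex[y^2] + 2\opt = O(1+\opt)$ because the optimal SIM is close to $y \in [0,1]$. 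Hence the inner expectation is $O(\lambda B^2 + 1)$, giving
\[
    \elltwoerror(p) \;\le\; \tfrac{2\,\opt(\simclass_B)}{\alpha} + O\!\big(\alpha (\lambda B^2 + 1)\big) + 2\epsilon'.
\]
Optimizing over $\alpha \in (0,1]$ (analytically, setting $\alpha = \sqrt{\opt(\simclass_B)/(\lambda B^2+1)}$, which lies in $(0,1]$ because $\opt(\simclass_B) \le 1$ and $\lambda B^2 \ge 1$ WLOG) yields $\elltwoerror(p) \le O\!\big(B\sqrt{\lambda\,\opt(\simclass_B)}\big) + O(\epsilon')$, which gives the claimed bound after absorbing constants into $\epsilon$.

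\paragraph{Main obstacle.} The cleanest part is the optimization over $\alpha$; the subtle part is (i) invoking the \cite{GopalanHKRW23} omniprediction guarantee with an explicitly polynomial dependence on $B, \lambda, 1/\epsilon$ appropriate for our unbounded-marginal setting (their generalization bounds rely on the second moment assumption on $\Dmarginal$ combined with the norm bound on $\w$), and (ii) the requirement in Theorem~\ref{corollary:linear-surrogate-squared-expected} that $p$ take values in $\range(\tilde g'_\alpha)$; since $\tilde g'_\alpha$ has range $\R$ whenever $\alpha > 0$, any $[0,1]$-valued $p$ is automatically admissible, so no clipping is needed. The same predictor $p$ is analyzed at the hindsight-optimal $\alpha$, which is legitimate since $p$ was produced without reference to any activation.
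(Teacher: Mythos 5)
Your plan follows the paper's own route: regularize the unknown optimal activation by mixing in a linear term so it becomes bi-Lipschitz, apply Theorem~\ref{corollary:linear-surrogate-squared-expected} to the regularized activation via the omnipredictor of Theorem~\ref{theorem:omnipredictors}, bound the substitution error $\ex[(\widetilde{g}_{\alpha}'(\w^\star\cdot\x)-g_\star(\w^\star\cdot\x))^2]$ using the bounded second moments (this is exactly Lemma~\ref{lemma:general-activations}), and tune the mixing parameter. However, there is a genuine gap where you invoke the omniprediction guarantee. Theorem~\ref{theorem:omnipredictors} does \emph{not} cover ``every $1$-Lipschitz monotone activation'': it requires the dual link $\f'$ to be $(R,\gamma)$-bounded in the sense of Definition~\ref{definition:bounded-functions}, and its time and sample complexity scale polynomially with $R$, so the algorithm must be run with a polynomial bound $R$ fixed in advance. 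For your $\widetilde{g}_{\alpha}' = \alpha\,t+(1-\alpha)g_\star'$ the inverse link is bounded only with $R = O(\lambda B^2 + 1/\alpha)$ (establishing even this requires the Chebyshev argument the paper carries out to find $s$ with $|s|\le 2\lambda B^2$ and $g_\star'(s)\in[-1,2]$, which you skip). Since you set $\alpha=\sqrt{\opt/(\lambda B^2+1)}$ with no floor, $\alpha$ can be arbitrarily small---and is $0$ in the realizable case---so $1/\alpha$, hence the required $R$, is not $\poly(d,B,\lambda,1/\epsilon)$. A polynomial-budget run of Theorem~\ref{theorem:omnipredictors} therefore does not guarantee the matching-loss hypothesis for your hindsight-optimal $\widetilde{g}_{\alpha}$, which is precisely the step your argument leans on.

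The repair is the case split the paper performs and your proposal omits: if $\opt\le\epsilon^2$ take $\alpha=\epsilon$ (giving error $O((\lambda B^2+1)\epsilon)$, absorbed by rescaling $\epsilon$), and if $\opt>\epsilon^2$ take $\alpha=\sqrt{\opt}/(B\sqrt{\lambda})\ge \epsilon/(B\sqrt{\lambda})$. In both regimes $1/\alpha\le\poly(B,\lambda,1/\epsilon)$, so the inverse links of all activations used in the analysis are $(\poly,0)$-bounded and a single run of the omnipredictor with polynomial parameters covers them all; the rest of your computation (the $(a+b)^2\le 2a^2+2b^2$ decomposition and the second-moment bound) then goes through as in the paper. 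Two minor points: the optimal $g_\star$ need not attain the infimum defining $\opt$, so argue with an $\epsilon$-approximate minimizer; and your observation that any $[0,1]$-valued $p$ lies in $\range(\widetilde{g}_{\alpha}')=\R$ is correct and matches the paper.
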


In order to apply Theorem \ref{corollary:linear-surrogate-squared-expected}, we use the following theorem which is a combination of results in \cite{GopalanHKRW23}, where they show that the matching losses corresponding to a wide class of functions can all be minimized simultaneously by an efficiently computable predictor.

\begin{theorem}[Omnipredictors for Matching Losses, combination of results in \cite{GopalanHKRW23}]\label{theorem:omnipredictors}
    Let $\Djoint$ be a distribution over $\R^d\times [0,1]$ whose marginal on $\R^d$ has $\dparam$-bounded second moments. There is an algorithm that, given sample access to $\Djoint$, with high probability returns a predictor $p:\R \to (0,1)$ with the following guarantee. For any Fenchel-Legendre pair $(\f,\g)$ such that $\g':\R\to\R$ is $L$-Lipschitz, and $\f'$ satisfies some mild boundedness conditions (see Definition~\ref{definition:bounded-functions}), $p$ satisfies
    \[
        \L_\g(\f'\circ p \;;\Djoint) \le \min_{\|\w\|_2 \le \wnorm}\L_\g(\w \,;\Djoint) + \epsilon.
    \]
    The algorithm requires time and sample complexity $\poly(\dparam, \wnorm, L, \frac{1}{\epsilon})$.
    
\end{theorem}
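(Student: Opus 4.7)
The plan is to instantiate two black-box results from \cite{GopalanHKRW23} in sequence: their efficient calibrated-multiaccuracy learner, and their analytic reduction showing that calibrated multiaccuracy implies matching-loss omniprediction for every Fenchel-Legendre pair. The proof is then mostly a matter of bookkeeping accuracy parameters and checking that the $\dparam$-bounded second-moment assumption on $\Dmarginal$ suffices to make the calibrated-multiaccuracy stage polynomial in $d$, $\wnorm$, $\dparam$, and $1/\epsilon$.

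First, I would invoke the \cite{GopalanHKRW23} algorithm on the class of bounded linear functions $\mathcal{F} = \{\x \mapsto \w\cdot\x : \|\w\|_2 \le \wnorm\}$, tuned to accuracy $\epsilon'$ to be fixed later as a polynomial in $\epsilon$, $1/L$, $1/\wnorm$, and $1/\dparam$. With high probability this produces a predictor $p : \R^d \to (0,1)$ that is simultaneously $\epsilon'$-calibrated and $\epsilon'$-multiaccurate with respect to $\mathcal{F}$: that is, $|\ex_\Djoint[(\w\cdot\x)(y - p(\x))]| \le \epsilon'$ for every $\|\w\|_2 \le \wnorm$. The bounded second-moment hypothesis bounds the $L^2(\Djoint)$-norm of each $f \in \mathcal{F}$ by $\wnorm\sqrt{\dparam}$, which is what drives the uniform-convergence component of the sample complexity in \cite{GopalanHKRW23}.

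Next, fix an admissible pair $(\f,\g)$ with $L$-Lipschitz $\g'$ and $\f'$ obeying Definition \ref{definition:bounded-functions}, and derive the matching-loss bound from the algebraic identity
\[
\L_\g(\w\,;\Djoint) - \L_\g(\f'\circ p\,;\Djoint) = \ex_\Djoint\big[\Bregman_g(\w\cdot\x,\, \f'(p(\x)))\big] + \ex_\Djoint\big[(p(\x)-y)\,(\w\cdot\x - \f'(p(\x)))\big].
\]
This follows by expanding $\ell_\g(y,t) = g(t) - yt$, using the Fenchel identity $\g'(\f'(r)) = r$ from \eqref{equation:fenchel-inverse-gof} to peel off the first-order term, and recognizing the remainder as a Bregman divergence of $g$. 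Since $\Bregman_g \ge 0$, it suffices to lower-bound the signed residual $\ex[(p-y)(\w\cdot\x)] - \ex[(p-y)\f'(p)]$. The first summand is bounded in absolute value by $\epsilon'$ via multiaccuracy with respect to $\mathcal{F}$. The second is bounded by calibration: because $\f'(p(\x))$ is a measurable function of $p(\x)$ alone, a weighted form of $\epsilon'$-calibration yields $|\ex[(p-y)\f'(p)]| \le \epsilon' \cdot \infnorm{\f'}$, where $\infnorm{\f'}$ is taken over the effective range of $p$ and is finite by Definition \ref{definition:bounded-functions}. Choosing $\epsilon'$ polynomially smaller than $\epsilon / (1 + \infnorm{\f'})$ then yields the desired $\epsilon$ gap.

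The main obstacle is the calibration-driven term $\ex[(p(\x)-y)\f'(p(\x))]$: this is only meaningfully small when $\f'$ is controlled on the effective range of $p$, and this is precisely the technical role of the boundedness conditions in Definition \ref{definition:bounded-functions}. Without these, even the canonical logit link $\f'(t) = \ln(t/(1-t))$ diverges at the endpoints of $\range(\g')$ and the calibration-based bound falls apart; the definition is engineered so that the weighted calibration produced by \cite{GopalanHKRW23} converts into an $O(\epsilon)$ bound. Secondary considerations — discretizing $p$'s range so that empirical calibration is finite-sample-tractable, tracking the $L$-dependence in the sample complexity, and propagating $\epsilon'$ through the final boundedness constant — are routine and mirror the corresponding steps in \cite{GopalanHKRW23}.
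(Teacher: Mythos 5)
There is a genuine gap: you treat the calibrated-multiaccuracy learner of \cite{GopalanHKRW23} as a black box that directly applies under only $\dparam$-bounded second moments and real-valued labels in $[0,1]$, but their algorithm and its analysis are stated for marginals supported on the unit ball and for \emph{binary} labels. Adapting it to the setting of the theorem is precisely the substantive content of the paper's proof: one must (i) build a weak agnostic learner for bounded linear functions under a second-moment assumption only (Proposition~\ref{proposition:weak-learner}), (ii) reopen the internals of their multiaccuracy boosting procedure --- the multiaccuracy guarantee of their Lemma~7.2 degrades to $\alpha + \wnorm\sqrt{\dparam\delta}$ via Cauchy--Schwarz and the bound $\ex[(\w\cdot\x)^2]\le \wnorm^2\dparam$, and the potential argument of their Lemma~7.6 only goes through after shrinking the step size $\sigma$ relative to the weak learner's guarantee, since the bound $l_2(p^*,p_t)^2 - l_2(p^*,p_{t+1})^2 \ge \tfrac{\sigma\epsilon_3}{2} - B^2\dparam\sigma^2$ now carries a $\dparam$-dependent quadratic term --- and (iii) reduce $[0,1]$-valued labels to binary ones by Bernoulli rounding, which preserves every matching loss because $\ell_\g(y,t)=g(t)-yt$ is linear in $y$. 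Your proposal asserts instead that the second-moment bound ``drives the uniform-convergence component'' of their analysis, which is not how their result is obtained and leaves the polynomial time/sample claim unsupported; the binary-label issue is not addressed at all.

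On the analytic side, your Bregman decomposition of $\L_\g(\w)-\L_\g(\f'\circ p)$ is correct and is essentially a rederivation of Theorem~A.4 of \cite{GopalanHKRW23}, which the paper simply invokes. However, your bound $|\ex[(p-y)\f'(p)]|\le \epsilon'\,\infnorm{\f'}$ with ``$\infnorm{\f'}$ finite on the effective range of $p$ by Definition~\ref{definition:bounded-functions}'' is not right as stated: the range of $p$ is $(0,1)$ and Definition~\ref{definition:bounded-functions} does not bound $\f'$ there (for the logit link it diverges at both endpoints). The definition only controls $\f'$ on a truncated interval $[r_0,r_1]$, together with tail conditions, and the correct argument compares against the clamped link $\hat{\f}'$ (equal to $\f'$ on $[r_0,r_1]$ and constant outside), verifies pointwise that $\ell_\g(r,\hat{\f}'(r))\le \ell_\g(r,\f'(r))+\epsilon_2$ using convexity of $\g$, and then pays $\fbound(1/\epsilon_2)^{\speedparam}\epsilon_1+\epsilon_1+\epsilon_2$, optimizing over $\epsilon_2$. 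You gesture at this in your closing paragraph but do not supply the truncation step, so as written the calibration term is not controlled.
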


We aim to apply Theorem \ref{theorem:omnipredictors} to the class of all Lipschitz activations (which is wider than the class of bi-Lipschitz activations). This is enabled by the following lemma, whose proof is based on Theorem \ref{corollary:linear-surrogate-squared-expected} and the fact that the error of a predictor is bounded by the sum of the error of another predictor and the squared expected distance between the two predictors.

\begin{lemma}\label{lemma:general-activations}
    Let $\Djoint$ be a distribution over $\R^d\times [0,1]$. Let $\g':\R\to \R$ be some fixed activation, and $f'$ its dual. Consider the class $\glmclass_{g', \wnorm}$, and let $\w^*$ be the weights achieving $\opt(\glmclass_{g', \wnorm}, \Djoint)$. Let $\phi':\R\to \R$ be an $[\alpha,\beta]$ bi-Lipschitz function (differentiable except possibly at finitely many points) that we wish to approximate $g'$ by. Any predictor $p:\R^d\to \R$ that satisfies 
    \[
        \L_{\phi}(\f'\circ p\,;\Djoint) \le \min_{\|\w\|_2\le \wnorm}\L_{\phi}(\w\,;\Djoint) + \epsilon
    \]
    also satisfies the following $\ell_2$ error guarantee:
    \[
        \elltwoerror(p) \le \frac{2\beta}{\alpha} \, \opt(\glmclass_{g', \wnorm}) + \frac{2\beta}{\alpha} \, \ex\left[ \Bigr(\g'(\w^*\cdot \x) - \phi'(\w^*\cdot \x)\Bigr)^2 \right] + 2\beta\epsilon.
    \]
\end{lemma}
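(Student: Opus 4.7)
The plan is to apply Theorem \ref{corollary:linear-surrogate-squared-expected} with the bi-Lipschitz surrogate activation $\phi'$, and then convert the resulting bound in terms of $\opt(\glmclass_{\phi',\wnorm})$ into one in terms of $\opt(\glmclass_{g',\wnorm})$ via an approximation-error argument evaluated at the weight vector $\w^*$ that witnesses $\opt(\glmclass_{g',\wnorm})$.

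First I would observe that because $\phi'$ is $[\alpha,\beta]$-bi-Lipschitz, the Fenchel-Legendre pair associated with $\phi'$ satisfies the hypothesis of Theorem \ref{corollary:linear-surrogate-squared-expected}, and the assumed matching-loss guarantee on $p$ is precisely the $\epsilon$-competitiveness premise of that theorem. The theorem therefore yields directly
\[
    \elltwoerror(p) \;\le\; \frac{\beta}{\alpha}\,\opt(\glmclass_{\phi',\wnorm}) + 2\beta\epsilon.
\]

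Next I would upper-bound $\opt(\glmclass_{\phi',\wnorm})$ by plugging the specific weight vector $\w^*$ into the class $\glmclass_{\phi',\wnorm}$, which is a valid choice since $\|\w^*\|_2 \le \wnorm$. Writing $y - \phi'(\w^*\cdot\x) = (y - g'(\w^*\cdot\x)) + (g'(\w^*\cdot\x) - \phi'(\w^*\cdot\x))$ and applying the elementary inequality $(a+b)^2 \le 2a^2 + 2b^2$ gives
\[
    \opt(\glmclass_{\phi',\wnorm}) \;\le\; \ex\bigl[(y-\phi'(\w^*\cdot\x))^2\bigr] \;\le\; 2\,\opt(\glmclass_{g',\wnorm}) + 2\,\ex\bigl[(g'(\w^*\cdot\x) - \phi'(\w^*\cdot\x))^2\bigr],
\]
where the second step uses the definition of $\w^*$ as the minimizer of the $\ell_2$ error in $\glmclass_{g',\wnorm}$. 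Substituting this bound into the one from the first step yields exactly the claimed inequality, with the factor $\frac{2\beta}{\alpha}$ appearing in front of both $\opt(\glmclass_{g',\wnorm})$ and the approximation term, and $2\beta\epsilon$ as the final additive error.

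The proof is essentially bookkeeping on top of Theorem \ref{corollary:linear-surrogate-squared-expected}: the only conceptual step is recognizing that the cost of replacing the unknown true activation $g'$ by a bi-Lipschitz surrogate $\phi'$ can be cleanly isolated into the single pointwise term $\ex[(g'(\w^*\cdot\x) - \phi'(\w^*\cdot\x))^2]$, which depends only on $\w^*$ and the marginal distribution. This is the only nontrivial ingredient, and it is what the proof of Theorem \ref{theorem:main} will subsequently need to control by choosing $\phi'$ as a well-designed bi-Lipschitz approximation to $g'$ tailored to the second-moment assumption on the marginal.
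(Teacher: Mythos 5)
Your proposal is correct and follows essentially the same route as the paper: apply Theorem \ref{corollary:linear-surrogate-squared-expected} with the bi-Lipschitz activation $\phi'$, then bound the squared error of $\x\mapsto\phi'(\w^*\cdot\x)$ (equivalently, $\opt(\glmclass_{\phi',\wnorm})$ via the feasible point $\w^*$) using the decomposition $(a+b)^2\le 2a^2+2b^2$, which isolates the approximation term $\ex[(\g'(\w^*\cdot\x)-\phi'(\w^*\cdot\x))^2]$. This matches the paper's argument step for step, including the final constants.
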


By combining Theorem \ref{theorem:omnipredictors} with Lemma \ref{lemma:general-activations} (whose proofs can be found in Appendix \ref{appendix:proofs-of-section-learning-sims}), we are now ready to prove our main theorem.

\begin{proof}[Proof of Theorem \ref{theorem:main}]
    We will combine Theorem \ref{theorem:omnipredictors}, which states that there is an efficient algorithm that simultaneously minimizes the matching losses corresponding to bounded, non-decreasing and Lipschitz activations, with Lemma \ref{lemma:general-activations}, which implies that minimizing the matching loss corresponding to the nearest bi-Lipschitz activation is sufficient to obtain small error. Note that we may assume that $\epsilon<1/2$, since otherwise the problem is trivial (output the zero function and pick $C = 2$).

    As a first step, we show that link functions corresponding to bi-Lipschitz activations are bounded (according to Definition \ref{definition:bounded-functions}, for $\speedparam = 0$). In particular, let $\phi':\R\to\R$ be an $[\alpha,\beta]$ bi-Lipschitz activation for some $\beta\ge \alpha>0$ such that $\phi'(s) \in [-1,2]$ for some $s\in\R$ and let $\psi'$ be the inverse of $\phi'$ ($\phi'$ is invertible since it is strictly increasing). We will show that $\psi'(r)\in[-R,R]$ for any $r\in[0,1]$, for some $R = O(|s|+1/\alpha)$. 
    
    We pick $r_0=0$, $r_1=1$ and have that $|\psi'(\phi'(s))-\psi'(r_0)| \le \frac{1}{\alpha}|\phi'(s)-r_0| \le \frac{2}{\alpha}$. Hence $\psi'(r_0) \ge \psi'(\phi'(s)) - \frac{2}{\alpha} = s - \frac{1}{\alpha}$. Similarly, we have $\psi'(r_1)\le s+\frac{2}{\alpha}$. Therefore, $\psi'(r) \in [\psi'(0),\psi'(1)]\subseteq [-|s|-\frac{2}{\alpha},|s|+\frac{2}{\alpha}]$, for any $r\in[0,1]$, due to monotonicity of $\psi'$.

    For a given non-decreasing and $1$-Lipschitz $\g'$, we will now show that there is a bounded bi-Lipschitz activation $\phi'$ such that if the assumption of Lemma \ref{lemma:general-activations} is satisfied for $\phi'$ by a predictor $p$, then the error of $p$ is bounded by
    \[
        \elltwoerror(p) \le O(\wnorm\sqrt{\dparam}\sqrt{\opt_\g}) + O(\dparam \wnorm^2 \epsilon)
    \]
    Suppose, first, that $\opt_\g \le \epsilon^2$. Then, we pick $\phi'(t) = \g'(t)+\epsilon t$. Note that $\phi'$ is $[\epsilon,1+\epsilon]$ bi-Lipschitz. Moreover, since $\opt_\g\le \epsilon^2$, we must have some $s\in\R$ with $|s|\le 2{\dparam \wnorm^2}$ such that $\g'(s)\in[-1,2]$. Otherwise, $\opt_\g \ge \pr[|\w^*\cdot \x|\le 2{\dparam \wnorm^2}] = 1 - \pr[|\w^*\cdot \x|> 2{\dparam \wnorm^2}] \ge \frac{1}{4} >\epsilon^2$, due to Chebyshev's inequality, the fact that $\w^*\in\ws$ and the bounded moments assumption. Therefore, $\psi'$ is $(\fbound=2\dparam\wnorm^2+\frac{2}{\epsilon},\speedparam=0)$-bounded and we have
    \[
        \ex\left[ \Bigr( \g'(\w^*\cdot \x) - \phi'(\w^*\cdot \x) \Bigr)^2 \right] \le \epsilon^2\,\ex[(\w^*\cdot \x)^2] \le \epsilon^2 \dparam\wnorm^2
    \]
    As a consequence, under the assumption of Lemma \ref{lemma:general-activations} for $\phi'$, the error of the corresponding predictor $p$ is $\elltwoerror(p)\le 2(1+\epsilon) \epsilon + 2(1+\epsilon) \epsilon \dparam\wnorm^2 + 2(1+\epsilon)\epsilon = O(\dparam \wnorm^2 \epsilon)$.

    In the case that $\opt_\g>\epsilon^2$, we pick $\phi'(t) = \g'(t) + t\;{\frac{\sqrt{\opt_\g}}{\wnorm\sqrt{\dparam}}}$. We may also assume that $\opt_\g \le 1/2$, since otherwise any predictor with range $[0,1]$ will have error at most $2\opt_\g$. Then, $\phi'$ is $[\frac{1}{\wnorm}\sqrt{\opt_\g}/\sqrt{\dparam}, 1+\frac{1}{B}]$ bi-Lipschitz which gives
    \[
        \ex\left[ \Bigr( \g'(\w^*\cdot \x) - \phi'(\w^*\cdot \x) \Bigr)^2 \right] \le \frac{\opt_\g}{\wnorm^2\dparam}\,\ex[(\w^*\cdot \x)^2] \le \opt_\g 
    \]
    As a consequence, under the assumption of Lemma \ref{lemma:general-activations} for $\phi'$, the error of the corresponding predictor $p$ is $\elltwoerror(p)\le 4(1+\frac{1}{\wnorm})\wnorm\sqrt{\dparam}\sqrt{\opt_\g} + 2(1+\frac{1}{\wnorm})\epsilon$. Using a similar approach as for the case $\opt_\g\le \epsilon$, we can show that $\psi'$ is polynomially bounded (as per Definition \ref{definition:bounded-functions}), since $\opt_\g \le \frac{1}{2}$.

    To conclude the proof of Theorem \ref{theorem:main}, we apply Theorem \ref{theorem:omnipredictors} with appropriate (polynomial) choice of parameters, to show that there is an efficient algorithm that outputs a predictor $p:\R^d\to(0,1)$ for which the assumption of Lemma \ref{lemma:general-activations} holds simultaneously for all bi-Lipschitz activations ($\phi'$) with sufficiently bounded inverses ($\psi'$).
\end{proof}

\section{Stronger Guarantees for Logistic Regression}\label{section:stronger-bounds}

In this section, we follow the same recipe we used in Section \ref{section:distortion-bounds} to obtain distortion bounds similar to Theorem \ref{corollary:linear-surrogate-squared-expected} for the sigmoid activation (or, equivalently, the logistic model) under the assumption that the marginal distribution is sufficiently concentrated (see Definition \ref{definition:bounded-functions}). In particular, Theorem \ref{corollary:linear-surrogate-squared-expected} does not hold, since the sigmoid is not bi-Lipschitz and our main Theorem \ref{theorem:main} only provides a guarantee of $O(\sqrt{\opt})$ for squared error. We use appropriate pointwise distortion bounds for the matching loss corresponding to the sigmoid activation and provide guarantees of $\widetilde{O}({\opt})$ for logistic regression with respect to both squared and absolute error, under appropriate assumptions about the concentration of the marginal distribution. The proofs of this section are provided in Appendix \ref{appendix:proofs-of-section-stronger-bounds}.

For the logistic model, the link function $\f'$ is defined as $\f'(r) = \ln(\frac{r}{1-r})$, for $r\in(0,1)$ and the corresponding activation $\g'$ is the sigmoid $\g'(t) = \frac{1}{1+e^{-t}}$ for $t\in \R$. The corresponding matching loss is the logistic loss.

\paragraph{Squared error.} We first provide a result for squared loss minimization. In comparison to Theorem \ref{corollary:linear-surrogate-squared-expected}, the qualitative interpretation of the following theorem is that, while the sigmoid activation is not bi-Lipschitz, it is effectively bi-Lipschitz under sufficiently concentrated marginals.

\begin{theorem}[Squared Loss Minimization through Logistic Loss Minimization]\label{theorem:logistic-squared-expected}
    Let $\Djoint$ be a distribution over $\R^d\times [0,1]$ whose marginal on $\R^d$ is $(1,2)$-concentrated. Let $\g':\R\to\R$ be the sigmoid activation, i.e., $\g'(t) = (1+e^{-t})^{-1}$ for $t\in \R$. Assume that for some $\wnorm>0$, $\epsilon>0$ and a predictor $p:\R^d \to (0,1)$ we have
    \begin{equation}\label{equation:assumption-theorem-surrogate-squared-expected}
        \L_{\g}(\f'\circ p\,;\Djoint) \le \min_{\w: \|\w\|_2\le \wnorm}\L_{\g}(\w\,;\Djoint) + \epsilon
    \end{equation}
    If we let $\opt_\g = \min_{\|\w\|_2\le \wnorm} \elltwoerror(\g'_{\w})$, then for the predictor $p$ and some universal constant $C>0$ we also have
    \[
        \elltwoerror(p) \le C\,\opt_\g \exp\left(\wnorm^2 + {\sqrt{\wnorm^2\,\log\frac{1}{\opt_\g}}}\right) + 2\epsilon.
    \]
\end{theorem}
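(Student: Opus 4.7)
The plan is to mirror the Bregman-to-squared-loss pipeline of Section \ref{section:distortion-bounds}, but with a pointwise distortion bound that degrades with $|\w^* \cdot \x|$, since the sigmoid $\g'$ is not globally bi-Lipschitz ($\g'(u)(1 - \g'(u))$ decays like $e^{-|u|}$). The degradation will be controlled by restricting attention to the event $A = \{|\w^* \cdot \x| \le T\}$, on which $\g'$ is effectively $[\tfrac{1}{4} e^{-T}, \tfrac{1}{4}]$ bi-Lipschitz, and paying for $A^c$ using the $(1,2)$-concentration hypothesis. The right choice of $T$ turns out to be roughly $\wnorm^2 + \wnorm \sqrt{\log(1/\opt_\g)}$, matching the exponent in the claim.

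First, I would perform the standard reduction: by the Fenchel identity \eqref{equation:fenchel-inverse-fog-plus}, the hypothesis \eqref{equation:assumption-theorem-surrogate-squared-expected} translates to $\ex[\Bregman_\f(y, p)] \le \ex[\Bregman_\f(y, h^*)] + \epsilon$, where $h^* = \g'(\w^* \cdot \x)$ and $\w^*$ attains $\opt_\g$. Since $\f''(r) = 1/(r(1-r)) \ge 4$ on $(0,1)$, the mean-value form of the Bregman divergence (equivalently, Pinsker's inequality for Bernoulli KL) gives $(y-p)^2 \le \tfrac{1}{2} \Bregman_\f(y, p)$ pointwise, reducing the theorem to proving $\ex[\Bregman_\f(y, h^*)] = O(\opt_\g \exp(\wnorm^2 + \wnorm \sqrt{\log(1/\opt_\g)}))$.

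For that, I would use the standard $\chi^2$-upper bound on Bernoulli KL, $\Bregman_\f(y, h) \le (y-h)^2/(h(1-h))$ for all $y \in [0,1]$ and $h \in (0,1)$, and decompose on $A$. On $A$, $h^*(1-h^*) \ge \g'(-T)(1-\g'(-T)) \ge e^{-T}/4$, yielding $\ex[\Bregman_\f(y, h^*) \mathbf{1}_A] \le 4 e^T \ex[(y-h^*)^2 \mathbf{1}_A] \le 4 e^T \opt_\g$. On $A^c$, use the crude pointwise estimate $\Bregman_\f(y, h^*) \le -y \ln h^* - (1-y)\ln(1-h^*) \le |\w^* \cdot \x| + \ln 2$, combine with the sub-Gaussian tail $\pr(|\w^* \cdot \x| > s) \le \exp(-s^2/\wnorm^2)$ (from $(1,2)$-concentration and $\|\w^*\|_2 \le \wnorm$), and integrate via $\ex[X \mathbf{1}_{X > T}] = T \pr(X > T) + \int_T^\infty \pr(X > s)\, ds$ to get a contribution of order $(T + \wnorm^2/T) \exp(-T^2/\wnorm^2)$.

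The hardest part is balancing these two terms via the choice of $T$: larger $T$ shrinks the tail but inflates the main term by $e^T$. Taking $T = \wnorm^2 + \wnorm \sqrt{\log(1/\opt_\g)}$, one has $T^2/\wnorm^2 \ge \wnorm^2 + 2\wnorm\sqrt{\log(1/\opt_\g)} + \log(1/\opt_\g)$, so the tail contribution becomes $O(\opt_\g \exp(-\wnorm^2 - 2\wnorm \sqrt{\log(1/\opt_\g)}) \cdot (T + \wnorm^2/T))$, which is strictly dominated by the main term $4 \opt_\g \exp(\wnorm^2 + \wnorm\sqrt{\log(1/\opt_\g)})$. The remaining routine step is handling the degenerate regime where $\opt_\g$ is not small (in which case the claimed bound is trivial for any predictor with range in $[0,1]$).
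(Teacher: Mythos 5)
Your proposal is correct and follows essentially the same route as the paper's proof: Pinsker's inequality from below, an inverse-variance-type upper bound on the Bernoulli KL from above, truncation of $|\w^*\cdot \x|$ at a threshold, sub-Gaussian tail control from $(1,2)$-concentration, and balancing the threshold at roughly $\wnorm\sqrt{\log(1/\opt_\g)}$. The only (harmless) differences are minor: you use the elementary $\chi^2$ bound $\kl(y\|h)\le (y-h)^2/(h(1-h))$ where the paper invokes a lemma of G\"otze et al., you bound the KL on the tail event by $|\w^*\cdot\x|+\ln 2$ so that only a linear (rather than exponential) tail integral is needed, and you shift the threshold by $\wnorm^2$ so the $e^{\wnorm^2}$ factor enters through the main term rather than through the tail integral.
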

In particular, the squared error of $p$ is upper bounded by $\widetilde{O}(\opt_\g)$, since the function $t\mapsto\exp(\log^{1/2} t)$ is asymptotically smaller than any polynomial function $t\mapsto t^{\gamma}$ with $\gamma >0$.

Once more, the proof of our result is based on an appropriate pointwise distortion bound which we provide below. It follows by the fact that the Bregman divergence corresponding to the sigmoid is the Kullback-Leibler divergence and by combining Pinsker's inequality (lower bound) with Lemma 4.1 of \cite{gotze2019higher} (upper bound).

\begin{lemma}[Pointwise Distortion Bound for Sigmoid]\label{lemma:kl-and-squared-loss}
    Let $g'$ be the sigmoid activation. Then, for any $y,p\in(0,1)$ we have that $\Bregman_f(y,p) = \kl(y\| p) = y\ln(y/p) + (1-y)\ln(\frac{1-y}{1-p})$. Moreover
    \[
        \ell_{\g}(y,{\f'}(p)) - \ell_{\g}(y,{\f'}(y)) = \kl(y\| p) \in \left[ \frac{1}{2}(y-p)^2, \; \frac{2}{\min\{p,1-p\} }\cdot (y-p)^2 \right]
    \]
\end{lemma}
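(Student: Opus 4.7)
The plan is to prove the algebraic identity $\Bregman_\f(y,p) = \kl(y\|p)$ first, then sandwich $\kl(y\|p)$ by a constant multiple of $(y-p)^2$ on each side. The matching-loss identity $\ell_\g(y,\f'(p)) - \ell_\g(y,\f'(y)) = \Bregman_\f(y,p)$ holds for every Fenchel--Legendre pair, not just the sigmoid one: expanding $\ell_\g(y,t) = \g(t) - yt$ and substituting $\g(\f'(r)) = r\f'(r) - \f(r)$ from (\ref{equation:fenchel-inverse-fog-plus}) collapses the cross terms into $\f(y) - \f(p) - (y-p)\f'(p)$, which is exactly $\Bregman_\f(y,p)$ by definition. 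So once I identify $\f$ for the sigmoid I get the first line of the lemma immediately.

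To identify $\f$, I would integrate $\g'(t) = (1+e^{-t})^{-1}$ to get $\g(t) = \ln(1+e^t) - \ln 2$, and take the Fenchel conjugate. The stationary condition for $\sup_t\{rt - \g(t)\}$ is $\g'(t) = r$, yielding $t = \ln(r/(1-r)) = \f'(r)$ and $\f(r) = r\ln r + (1-r)\ln(1-r) + \ln 2$, i.e.\ the (shifted) negative binary entropy. Plugging into $\Bregman_\f(y,p) = \f(y) - \f(p) - (y-p)\f'(p)$ and regrouping the $y$ and $1-y$ terms then produces $y\ln(y/p) + (1-y)\ln((1-y)/(1-p)) = \kl(y\|p)$, as claimed.

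For the two-sided bound, the lower bound follows immediately from Pinsker's inequality applied to the two Bernoulli distributions with means $y$ and $p$: their total variation distance equals $|y-p|$, which gives $\kl(y\|p) \geq 2(y-p)^2 \geq \tfrac{1}{2}(y-p)^2$. For the upper bound I would cite Lemma 4.1 of \cite{gotze2019higher} as suggested in the excerpt; a short self-contained alternative is to use $\kl(y\|p) \leq \chi^2(y\|p) = (y-p)^2/(p(1-p))$ together with $p(1-p) \geq \tfrac{1}{2}\min\{p,1-p\}$, which yields exactly $2(y-p)^2/\min\{p,1-p\}$. I expect the main obstacle to be nothing more than matching the stated constant in the upper bound: elementary $\chi^2$ manipulations deliver the correct order but not always the sharpest prefactor, and controlling $\kl(y\|p)$ when $p$ is very close to $0$ or $1$ is precisely what the more delicate rearrangement in \cite{gotze2019higher} is designed for.
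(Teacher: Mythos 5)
Your proposal is correct, and its skeleton matches the paper's: the identity $\ell_{\g}(y,\f'(p))-\ell_{\g}(y,\f'(y))=\Bregman_{\f}(y,p)$ is exactly the generic Fenchel--Legendre computation the paper uses, the identification of $\f$ as the (shifted) negative binary entropy gives $\Bregman_{\f}(y,p)=\kl(y\|p)$, and the lower bound via Pinsker applied to $\mathrm{Ber}(y)$ vs.\ $\mathrm{Ber}(p)$ is the paper's argument verbatim. The only divergence is the upper bound: the paper simply invokes Lemma 4.1 of G\"otze et al., whereas your self-contained alternative via $\kl(y\|p)\le\chi^2(y\|p)$ (from $\ln x\le x-1$), the exact identity $\chi^2\bigl(\mathrm{Ber}(y)\,\|\,\mathrm{Ber}(p)\bigr)=\frac{(y-p)^2}{p(1-p)}$, and $p(1-p)=\min\{p,1-p\}\max\{p,1-p\}\ge\frac12\min\{p,1-p\}$ is a fully elementary replacement that yields $\kl(y\|p)\le\frac{(y-p)^2}{p(1-p)}\le\frac{2}{\min\{p,1-p\}}(y-p)^2$ --- i.e., exactly (in fact slightly better than) the stated constant. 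So your closing worry about not matching the prefactor, and about needing the more delicate rearrangement near $p\in\{0,1\}$, is unfounded: the $\chi^2$ route already handles the full range $p\in(0,1)$ and removes the need for the external citation, which is a small but genuine simplification over the paper's proof.
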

We translate Lemma \ref{lemma:kl-and-squared-loss} to a bound on the squared error of a matching loss minimizer (in this case, the logistic loss) using an approach similar to the one for Theorem \ref{corollary:linear-surrogate-squared-expected}. In order to use the upper bound on the surrogate loss provided by Lemma \ref{lemma:kl-and-squared-loss}, we apply it to $p \leftarrow \g'(\w\cdot \x)$, where $\g'$ is the sigmoid function, and observe that the quantity $\frac{1}{p(1-p)}$ is exponential in $|\w\cdot \x|$. Hence, when the marginal is $(\dparam,2)$-concentrated (subgaussian concentration), then $\frac{1}{p(1-p)}$ is effectively bounded.

\paragraph{Absolute error.} All of the results we have provided so far have focused on squared error minimization. We now show that our approach yields results even for the absolute error, which can also be viewed as learning in the p-concept model \cite{kearns1994efficient}. In particular, for a distribution $\Djoint$ over $\R^d\times[0,1]$, we define the absolute error of a predictor $p:\R^d\to [0,1]$ as follows.
\[
    \elloneerror(p) = \ex_{(\x,y)\sim\Djoint}[|y-p(\x)|]
\]
In the specific case when the labels are binary, i.e., $y\in\{0,1\}$, we have
\[
    \elloneerror(p) = \ex_{(\x,y)\sim\Djoint}[|y-p(\x)|] = \pr_{(\x,y,y_p) \sim \Djoint_p}[y\neq y_p] \tag{ see Proposition \ref{proposition:ell-1-p-concept}}
\]
where the distribution $\Djoint_p$ is over $\R^d\times\{0,1\}\times\{0,1\}$ and is formed by drawing samples $(\x,y)$ from $\Djoint$ and, given $\x$, forming $y_p$ by drawing a conditionally independent Bernoulli random variable with parameter $p(\x)$. We provide the following result.

\begin{theorem}[Absolute Loss Minimization through Logistic Loss Minimization]\label{theorem:logistic-absolute-expected}
    Let $\Djoint$ be a distribution over $\R^d\times \{0,1\}$ whose marginal on $\R^d$ is $(1,1)$-concentrated. Let $\g':\R\to\R$ be the sigmoid activation, i.e., $\g'(t) = (1+e^{-t})^{-1}$ for $t\in \R$. Assume that for some $\wnorm>0$, $\epsilon>0$ and a predictor $p:\R^d \to (0,1)$ we have
    \begin{equation}\label{equation:assumption-theorem-surrogate-absolute-expected}
        \L_{\g}(\f'\circ p\,;\Djoint) \le \min_{\w: \|\w\|_2\le \wnorm}\L_{\g}(\w\,;\Djoint) + \epsilon
    \end{equation}
    If we let $\opt_\g = \min_{\|\w\|_2\le \wnorm} \elloneerror(\g'_{\w})$, then for the predictor $p$ and some universal constant $C>0$ we also have
    \[
        \elloneerror(p) \le C\,\wnorm \,\opt_\g\, \log\frac{1}{\opt_\g} + \epsilon
    \]
\end{theorem}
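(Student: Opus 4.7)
The plan is to translate the logistic loss bound into a bound on $\ex[\kl(y\|p(\x))]$, and then to sandwich this quantity between $\elloneerror(p)$ (from below) and $O(\wnorm\,\opt_\g\log(1/\opt_\g))$ (from above), exploiting both the binary structure of $y$ and the subexponential concentration of the marginal.

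Since $y\in\{0,1\}$ makes the self-loss $\ell_\g(y,\f'(y))$ zero via the natural limits $\f'(0)=-\infty$, $\f'(1)=+\infty$, the binary-label extension of Lemma~\ref{lemma:kl-and-squared-loss} gives $\L_\g(\f'\circ p\,;\Djoint)=\ex[\kl(y\|p(\x))]$. The elementary inequality $-\ln q\ge 1-q$ on $(0,1]$ then yields $\kl(y\|q)\ge|y-q|$ for $y\in\{0,1\}$ and $q\in(0,1)$, so assumption~\eqref{equation:assumption-theorem-surrogate-absolute-expected} reduces the theorem to the upper bound $\ex[\kl(y\|\g'(\w^*\cdot\x))]\le O(\wnorm\,\opt_\g\log(1/\opt_\g))$, where $\w^*$ with $\|\w^*\|_2\le\wnorm$ attains $\opt_\g$.

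For this upper bound, the key step is the pointwise distortion
\[
\kl(y\,\|\,\g'(t))\,\le\,2\,|y-\g'(t)|\,(|t|+1)\quad\text{for }y\in\{0,1\},\ t\in\R,
\]
which follows from a short case analysis on the sign of $t$: for $t\le 0$, $y=0$ one uses $\ln(1+e^t)\le e^t$ together with $\g'(t)\ge e^t/2$; for $t\ge 0$, $y=0$ one uses $\ln(1+e^t)\le t+1$ together with $\g'(t)\ge 1/2$; and the $y=1$ cases follow by the symmetry $t\mapsto -t$.

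Applying this distortion to $t=\w^*\cdot\x$ and splitting on the event $\{|\w^*\cdot\x|\le T\}$ yields a bulk contribution of at most $2(T+1)\,\opt_\g$ (since $\ex[|y-\g'(\w^*\cdot\x)|]=\opt_\g$) and, after bounding $|y-\g'(\w^*\cdot\x)|\le 1$ and invoking the $(1,1)$-concentration of $\Dmarginal$ (which gives $\pr[|\w^*\cdot\x|>r]\le e^{-r/\wnorm}$ for $\|\w^*\|_2\le\wnorm$ by rescaling to the unit sphere), a tail contribution bounded by $(T+\wnorm+1)\,e^{-T/\wnorm}$ via a standard one-dimensional tail integral. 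The choice $T=\wnorm\ln(1/\opt_\g)$ balances the two terms and produces the target $O(\wnorm\,\opt_\g\log(1/\opt_\g))$ bound; combining with the additive $\epsilon$ inherited from~\eqref{equation:assumption-theorem-surrogate-absolute-expected} then finishes the proof. The main obstacle is securing the sharp pointwise distortion with the correct $(|t|+1)$ factor: routing instead through Pinsker's inequality (as for the squared-loss result) would give only a $\sqrt{\opt_\g}$-type bound, so it is crucial that we exploit the fact that $y$ is binary (rather than merely in $[0,1]$) to obtain the almost-linear distortion in the first place.
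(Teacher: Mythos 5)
Your proof is correct and follows essentially the same route as the paper: a pointwise distortion sandwiching the binary cross-entropy/KL between $|y-p|$ from below and $O\bigl(|y-\g'(t)|\,(|t|+1)\bigr)$ from above, then a split at a threshold with a subexponential tail integral and the choice $T=\wnorm\ln(1/\opt_\g)$. One cosmetic slip: with the paper's normalization $\g(0)=0$ one has $\g(t)=\ln(1+e^t)-\ln 2$, so the self-loss limit is $-\ln 2$ and $\L_\g(\f'\circ p\,;\Djoint)=\ex[\kl(y\|p(\x))]-\ln 2$ rather than equal to the KL term; since the same constant appears for $\w^*$, it cancels and your argument is unaffected.
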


The corresponding distortion bound in this case is between the absolute and logistic losses and works when the labels are binary.

\begin{lemma}[Pointwise Distortion between Absolute and Logistic Loss]\label{lemma:kl-and-absolute-loss}
    Let $g'$ be the sigmoid activation. Then, there is a constant $c\in \R$ such that for any $y\in \{0,1\}$ and $p\in(0,1)$, we have
    \[
        \ell_{\g}(y,{\f'}(p)) - c \in \left[ |y-p| \;,\;    2 \cdot \ln\left(\frac{1}{p(1-p) } \right) \cdot |y-p| \right]
    \]
\end{lemma}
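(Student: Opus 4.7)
The plan is to identify $\ell_\g(y, \f'(p))$ with a constant shift of the binary cross-entropy and then reduce the claim to elementary one-variable inequalities. Specifically, I would first compute $\ell_\g(y, \f'(p))$ explicitly. Since $\g'(t) = (1+e^{-t})^{-1}$, integration gives $\g(t) = \ln(1+e^t) - \ln 2$, and since $\f'(p) = \ln(p/(1-p))$, we have $1 + e^{\f'(p)} = 1/(1-p)$. Substituting into $\ell_\g(y,t) = \g(t) - yt$ yields
\[
    \ell_\g(y, \f'(p)) \;=\; -y \ln p - (1-y)\ln(1-p) - \ln 2,
\]
the binary cross-entropy shifted by $-\ln 2$. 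Taking $c = -\ln 2$ and restricting to $y \in \{0,1\}$, the claim reduces to proving that $-\ln(1-p)$ lies in $[p,\, 2p \ln(1/(p(1-p)))]$ when $y=0$, and that $-\ln p$ lies in $[1-p,\, 2(1-p)\ln(1/(p(1-p)))]$ when $y=1$. These two cases are interchanged by the substitution $p \mapsto 1-p$, so it suffices to handle $y=0$.

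The lower bound $-\ln(1-p) \ge p$ is then immediate from the Taylor expansion $-\ln(1-p) = \sum_{k \ge 1} p^k / k$.

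The upper bound is the only step with real content. After expanding the right-hand side, $-\ln(1-p) \le 2p \ln(1/(p(1-p)))$ is equivalent to $(1-2p)(-\ln(1-p)) \le -2p \ln p$. For $p \ge 1/2$ this is trivial, as the left-hand side is nonpositive while the right-hand side is nonnegative. For $p \in (0, 1/2]$ I would chain
\[
    (1-2p)\bigl(-\ln(1-p)\bigr) \;\le\; (1-2p)\cdot\tfrac{p}{1-p} \;\le\; p \;\le\; -2p\ln p,
\]
using the integral bound $-\ln(1-p) \le p/(1-p)$, the inequality $(1-2p)/(1-p) \le 1$ (equivalent to $p \ge 0$), and the fact that $-2\ln p \ge 2\ln 2 > 1$ for $p \le 1/2$. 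The hardest part is simply spotting this chain; everything else is bookkeeping around the cross-entropy identification.
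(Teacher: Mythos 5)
Your proof is correct and follows essentially the same route as the paper's: identifying $\ell_\g(y,\f'(p))$ with the cross-entropy shifted by $c=-\ln 2$, getting the lower bound from the series for $-\ln(1-p)$, reducing to $y=0$ by the symmetry $p\mapsto 1-p$, and splitting the upper bound at $p=1/2$. The only difference is cosmetic: in the small-$p$ case the paper bounds $\crossentropy(0,p)\le \frac{p}{1-p}\le 2p$ and uses $\ln\frac{1}{p(1-p)}\ge 1$, while you rearrange to $(1-2p)(-\ln(1-p))\le -2p\ln p$ and verify it by an equivalent elementary chain.
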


The bound of Theorem \ref{theorem:logistic-absolute-expected} implies an algorithm for learning an unknown sigmoid neuron in the p-concept model, by minimizing a convex loss. While there are algorithms achieving stronger guarantees \cite{diakonikolas22learningsingleneuron} for agnostically learning sigmoid neurons, such algorithms typically make strong distributional assumptions including concentration, anti-concentration and anti-anti-concentration or boundedness.

Moreover, it is useful to compare the bound we provide in Theorem \ref{theorem:logistic-absolute-expected} to a lower bound by \cite[Theorem 4.1]{diakonikolas20non}, which concerns the problem of agnostically learning halfspaces by minimizing convex surrogates. In particular, they show that even under log-concave marginals, no convex surrogate loss can achieve a guarantee better than $O(\opt \log(1/\opt))$, where $\opt$ is measured with respect to the $\ell_1$ error (which is equal to the probability of error). The result is not directly comparable to our upper bound, since we examine the sigmoid activation. Their setting can be viewed as a limit case of ours by letting the norm of the weight vector grow indefinitely (the sigmoid tends to the step function), but the main complication is that our upper bound is of the form $O(\wnorm \opt \log(1/\opt))$, which scales with $\wnorm$. However, their lower bound concerns marginal distributions that are not only concentrated, but are also anti-concentrated and anti-anticoncentrated, while our results only make concentration assumptions.

\section{Necessity of Norm Dependence}\label{sec:lower-bound}

In this final section, we use a lower bound due to \cite{{diakonikolas22hardness}} on agnostic learning of GLMs  using SQ algorithms and compare it with our main result (Theorem \ref{theorem:main}). For simplicity, we specialize to the case of the standard sigmoid or logistic function. A modification of their proof ensures that the bound holds under isotropic marginals.\footnote{Specifically, our features correspond to all multilinear monomials (or parities) of degree at most $k$ over $\{\pm 1\}^n$, whereas they use all monomials (not necessarily multilinear) of degree at most $k$. These yield equivalent representations since the hard distributions are obtained from the uniform distribution on $\{\pm 1\}^n$.}
\begin{theorem}[{SQ Lower Bound for Agnostically Learning GLMs, variant of \cite[Thm C.3]{diakonikolas22hardness}}]\label{theorem:sq-lower-bound}
    Let $g' : \R^d \to \R$ be the standard logistic function. Any SQ algorithm either requires $d^{\omega(1)}$ queries or $d^{-\omega(1)}$ tolerance to distinguish between the following two labeled distributions: \begin{itemize}
        \item (Labels have signal.) $\Djoint_{\signal}$ on $\R^d \times \R$ is such that $\opt(\glmclass_{g', B}, \Djoint_\signal) \leq \exp(-\Omega(\log^{1/4} d)) = o(1)$ for some $B = \poly(d)$.
        \item (Labels are random.) $\Djoint_{\random}$ on $\R^d \times \R$ is such that the labels $y$ are drawn i.i.d. from $\{a, b\}$ for certain universal constants $a, b \in [0, 1]$. In particular, $\opt(\glmclass_{g', B}, \Djoint_{\random}) = \Omega(1)$ for any $B$.
    \end{itemize} Both $\Djoint_{\signal}$ and $\Djoint_{\random}$ have the same marginal on $\R^d$, with $1$-bounded second moments.
\end{theorem}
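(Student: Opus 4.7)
The plan is to adapt the proof of \cite{diakonikolas22hardness} from general polynomial features over a Gaussian marginal to multilinear parity features over $\{\pm 1\}^n$ equipped with the uniform measure, as the footnote indicates. Replacing their feature system with parities preserves the Fourier orthogonality that drives the SQ lower bound, and automatically produces an isotropic feature marginal with $1$-bounded second moments in every direction, so the resulting bound directly matches the setting of Theorem~\ref{theorem:main}.

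First I would set up the features. Fix a degree parameter $k$ sub-polynomial in $n$, and let $d = \sum_{j=0}^{k}\binom{n}{j}$. Define $\phi : \{\pm 1\}^n \to \R^d$ by listing all multilinear monomials $\chi_T(x) = \prod_{i \in T} x_i$ with $|T| \le k$. Under the uniform distribution on $\{\pm 1\}^n$ these parities are orthonormal, so the pushforward of the uniform distribution via $\phi$ has identity covariance in $\R^d$ and every unit direction has second moment exactly $1$. For each $S \subseteq [n]$ with $|S| = k$ I would then construct $\Djoint_\signal^S$ by drawing $x$ uniformly and setting the label via a slightly noisy sigmoid applied to $M \cdot \chi_S(x)$ for an appropriately chosen $M = \poly(d)$; $\Djoint_\random$ draws $x$ uniformly and an independent label from the symmetric two-point distribution that matches the marginal label law of $\Djoint_\signal^S$. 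Both distributions share the same feature marginal by construction.

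For the signal realizability claim, the one-hot weight vector $\w = M \cdot e_{\chi_S} \in \R^d$ (with mass concentrated on the coordinate of $\phi$ indexed by $\chi_S$) gives $g'(\w \cdot \phi(x)) = g'(M \chi_S(x))$, which in expectation equals the noisy label up to an additive error controlled by the saturation of $g'$ at $\pm M$ and the noise rate. Calibrating $M$ and the label noise jointly yields $\opt(\glmclass_{g', B}, \Djoint_\signal^S) \le \exp(-\Omega(\log^{1/4} d))$ for some $B = \poly(d)$. For the random distribution the labels are independent of $x$, so the best GLM is the best constant predictor and achieves $\opt = \Omega(1)$.

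Finally, the SQ lower bound follows from Fourier orthogonality. For any bounded statistical query $q : \R^d \times \R \to [-1, 1]$, the gap $|\ex_{\Djoint_\signal^S}[q] - \ex_{\Djoint_\random}[q]|$ factors through the Fourier coefficient of a fixed bounded function at the character $\chi_S$. Parseval's identity then bounds the sum of squared gaps over all $\binom{n}{k}$ choices of $S$ by a constant, so by a standard averaging argument any SQ algorithm must either make $\binom{n}{k}$ queries or tolerate error below $1/\sqrt{\binom{n}{k}}$. Choosing $k$ so that $\binom{n}{k} = d^{\omega(1)}$ (for instance $k = \Theta(\log d / \log \log d)$) produces the stated $d^{\omega(1)}$-vs-$d^{-\omega(1)}$ trade-off. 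The main obstacle is the joint calibration of $M$, $k$, and the label noise so that the realizability claim and the SQ decorrelation simultaneously hit the target quantitative strengths; this calibration is precisely the technical core of the argument in \cite{diakonikolas22hardness}, and it transports essentially unchanged to our multilinear parity feature system.
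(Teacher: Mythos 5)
Your feature setup (all multilinear parities of degree at most $k$ over the uniform hypercube, giving an exactly isotropic marginal with $1$-bounded second moments) is indeed the modification the paper's footnote describes. But the core of your hardness argument has a genuine gap: the parameter accounting is inconsistent. In your construction the ambient dimension is $d = \sum_{j\le k}\binom{n}{j} \ge \binom{n}{k}$, so the family of hidden parities has size at most $d$, and the standard parity-decorrelation/Parseval argument you invoke yields hardness only against roughly $\binom{n}{k} \le d$ queries at tolerance roughly $\binom{n}{k}^{-1/2} \ge d^{-O(1)}$ --- i.e.\ $\poly(d)$, never $d^{\omega(1)}$. The suggested choice ``$k = \Theta(\log d/\log\log d)$ so that $\binom{n}{k} = d^{\omega(1)}$'' is impossible, because $d$ is determined by $n$ and $k$ and always dominates $\binom{n}{k}$. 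The failure is not just in the counting: since the hidden parity $\chi_S$ is literally one coordinate of your feature map, and in your signal distribution the label has constant correlation with that coordinate (it is a near-deterministic function of $\chi_S(x)$), an SQ algorithm can distinguish $\Djoint_{\signal}^S$ from $\Djoint_{\random}$ using $d$ correlation queries $\ex[y\,\phi_i(\x)]$ with constant tolerance, one per feature. So the two families you build simply do not exhibit the claimed $d^{\omega(1)}$-query / $d^{-\omega(1)}$-tolerance trade-off.

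This is exactly the tension the cited construction has to resolve, and it is why the theorem asserts only $\opt(\glmclass_{g',B},\Djoint_{\signal}) \le \exp(-\Omega(\log^{1/4} d))$ rather than error essentially zero: the hidden structure must lie \emph{outside} the span of the degree-$\le k$ features, so that the relevant SQ dimension is superpolynomial in the number of features $d$ (not merely in the number of underlying variables $n$), while a bounded-norm GLM over those features can still \emph{approximately} fit the labels; balancing the approximation quality against the decorrelation bound is the technical core of \cite[Thm C.3]{diakonikolas22hardness}. The paper's own proof does not rebuild the hard instance at all --- it imports that construction wholesale and only swaps general monomials for multilinear ones (equivalent over $\{\pm 1\}^n$) to make the marginal isotropic. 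Your sketch instead replaces the hard instance with an exactly representable one-hot parity, which makes the realizability step trivial but destroys the lower bound; as written, the proposal does not establish the theorem.
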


Let us consider applying our main theorem (Theorem~\ref{theorem:main}) to this setting, with $\Djoint$ being either $\Djoint_{\signal}$ or $\Djoint_{\random}$, and with the same $\wnorm = \poly(d)$ as is required to achieve small error in the ``labels have signal'' case. We would obtain a predictor with $\ell_2$ error at most $B \sqrt{\opt(\glmclass_{g', B})}$ (or indeed with $\simclass_{B}$ in place of $\glmclass_{g', B}$). Since this is $\omega(1)$, this guarantee is insufficient to distinguish the two cases above, which is as it should be since our main algorithm indeed fits into the SQ framework.

Theorem~\ref{theorem:sq-lower-bound} does, however, justify a dependence on the norm $B$ in our main result. In particular, it is clear that a guarantee of the form $\opt(\glmclass_{g', B})^{c}$ for any universal constant $c > 0$ (independent of $\wnorm$) would be too strong, as it would let us distinguish the two cases above. In fact, this lower bound rules out a large space of potential error guarantees stated as functions of $B$ and $\opt(\glmclass_{g', B})$.  For instance, for sufficiently large $d$, it rules out any error guarantee of the form $\exp(O(\log^{1/5} B)) \cdot  \opt(\glmclass_{g', B})^{c'}$ for any universal constant $c' > 0$.

\bibliographystyle{alpha}
\bibliography{references.bib}

\newpage
\appendix

\section{Technical Lemmas}

In this section, we provide some technical Lemmas that we use in our proofs.

\begin{proposition}[Weak Learner for Linear Functions]\label{proposition:weak-learner}
    Let $\Djoint$ be a distribution over $\R^d\times [-1,1]$ whose marginal on $\R^d$ has $\dparam$-bounded second moments and $\wnorm > 0$. For any $\epsilon > 0$ and $\delta\in(0,1)$, there is a universal constant $C>0$ and an algorithm that given a set $S$ of i.i.d. samples from $\Djoint$ of size at least $C\cdot \frac{d^2\dparam\wnorm^2}{\epsilon^2}\log\frac{1}{\delta}$, runs in time $\poly(d,|S|)$ and satisfies the following specifications with probability at least $1-\delta$
    \begin{enumerate}
        \item If $\ex_{(\x,z)\sim\Djoint}[z(\w\cdot \x)] \ge \epsilon$ for some $\w\in\R^d$ with $\|\w\|_2\le \wnorm$, then the algorithm accepts. Otherwise, it may or may not reject and return a special symbol.
        \item If the algorithm accepts then it returns $\w\in \R^d$ with $\|\w\|_2\le \wnorm$ such that we have $\ex_{(\x,z)\sim\Djoint}[z(\w\cdot \x)] \ge \epsilon/4$.
    \end{enumerate}
\end{proposition}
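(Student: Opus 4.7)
The plan reduces the weak-learning problem to approximating the single vector $\vv^* \defeq \ex_{(\x,z)\sim\Djoint}[z\x] \in \R^d$. Indeed $\ex[z(\w\cdot\x)] = \w\cdot\vv^*$, so the maximum over $\|\w\|_2 \le \wnorm$ equals $\wnorm\|\vv^*\|_2$ and is attained at $\w = \wnorm\vv^*/\|\vv^*\|_2$. Thus we only need a sufficiently good $\ell_2$ estimate of $\vv^*$.

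For the estimation, I would form the empirical mean $\hat\vv = \frac{1}{|S|}\sum_{(\x,z)\in S} z\x$. The $\dparam$-bounded second moments assumption (applied to $\vv = e_i$) gives $\mathrm{Var}(z\x_i) \le \ex[\x_i^2] \le \dparam$ for every coordinate $i$, so coordinate-wise Chebyshev yields $\pr[|\hat\vv_i - \vv^*_i| \ge t] \le \dparam/(|S| t^2)$. Choosing $t = \epsilon/(16\wnorm\sqrt{d})$ turns an $\ell_\infty$ bound of $t$ into an $\ell_2$ bound of $\epsilon/(16\wnorm)$, and a union bound over the $d$ coordinates produces $\|\hat\vv - \vv^*\|_2 \le \epsilon/(16\wnorm)$ with probability $1-\delta$ using $O(d^2\dparam\wnorm^2/(\epsilon^2\delta))$ samples. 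Replacing the $1/\delta$ factor by $\log(1/\delta)$ is standard: I would run this estimator inside a coordinate-wise median-of-means wrapper with $k = \Theta(\log(d/\delta))$ groups, which exponentiates the per-coordinate failure probability and delivers the claimed total sample size.

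The algorithm then accepts iff $\|\hat\vv\|_2 \ge \epsilon/(2\wnorm)$ and, when it accepts, returns $\w = \wnorm\,\hat\vv/\|\hat\vv\|_2$; otherwise it returns the rejection symbol. Conditioning on the high-probability event $\eta \defeq \|\hat\vv-\vv^*\|_2 \le \epsilon/(16\wnorm)$, both required properties drop out of triangle-inequality manipulations. For (i), if $\wnorm\|\vv^*\|_2 \ge \epsilon$ then $\|\hat\vv\|_2 \ge \epsilon/\wnorm - \eta \ge 15\epsilon/(16\wnorm)$, well above the threshold, so the algorithm accepts. For (ii), upon acceptance $\|\vv^*\|_2 \ge \|\hat\vv\|_2 - \eta \ge 7\epsilon/(16\wnorm)$, and expanding $\hat\vv\cdot\vv^* = \|\hat\vv\|_2^2 + \hat\vv\cdot(\vv^*-\hat\vv)$ yields
\[
    \w\cdot\vv^* \;=\; \wnorm\,\frac{\hat\vv\cdot\vv^*}{\|\hat\vv\|_2} \;\ge\; \wnorm\bigl(\|\hat\vv\|_2 - \eta\bigr) \;\ge\; \wnorm\bigl(\|\vv^*\|_2 - 2\eta\bigr) \;\ge\; \frac{5\epsilon}{16} \;\ge\; \frac{\epsilon}{4}.
\]

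The main technical subtlety — though not a serious obstacle — is that only a second moment bound is available on the feature distribution, so I must use Chebyshev-based concentration rather than a Hoeffding-type tail bound; this is precisely what generates the extra factor of $d$ in the sample size via the $\ell_\infty$-to-$\ell_2$ conversion. The remaining steps are routine.
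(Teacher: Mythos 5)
Your proposal is correct and takes essentially the same approach as the paper: estimate $\ex_{(\x,z)\sim\Djoint}[z\x]$ by its empirical mean using coordinate-wise Chebyshev with a union bound over the $d$ coordinates, accept when the estimate's norm exceeds a threshold of order $\epsilon/\wnorm$, and return the rescaled vector, differing only in constants. Your coordinate-wise median-of-means wrapper is just a more explicit implementation of the paper's ``boost the success probability by repetition'' step.
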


\begin{proof}
    We will prove the proposition for $\delta = 1/6$. We may boost the probability of success with repetition.
    
    The algorithm computes the vector $\vv = \ex_S[z\,\x]$. If $\|\vv\|_2 \le \frac{3\epsilon}{4\wnorm}$, then the algorithm rejects and outputs a special symbol. Otherwise, it outputs the vector $\frac{\wnorm}{\|\vv\|_2} \, \vv$.

    Suppose, first, that $\ex_{(\x,z)\sim\Djoint}[z(\w\cdot \x)] \ge \epsilon$ for some $\w$ with $\|\w\|_2 \le \wnorm$. Then, due to Chebyshev's inequality we have for any $i\in[d]$
    \begin{align*}
        \pr\left[ \left|\ex_S[z\, \x_i] - \ex_{\Djoint}[z\, \x_i]\right| > \frac{\epsilon}{8\, \wnorm\sqrt{d}} \right] \le \frac{64\, d\, \wnorm^2}{|S|\, \epsilon^2} \ex[\x_i^2] \le \frac{64\, d\, \wnorm^2\, \dparam}{|S|\, \epsilon^2} \le \frac{1}{6\, d} \tag{ for large enough $|S|$}
    \end{align*}
    Hence, with probability at least $5/6$, we have $\|\ex_S[z\x] - \ex_\Djoint[z\x]\|_2 \le \frac{\epsilon}{8\wnorm}$, due to a union bound. Therefore, $\|\vv\|_2 \ge \|\ex_\Djoint[z\x]\|_2 - \frac{\epsilon}{8\wnorm} \ge \frac{\ex_{\Djoint}[z(\w\cdot\x)]}{B} - \frac{\epsilon}{8\wnorm} \ge \frac{7\wnorm}{8}$ and the algorithm accepts.

    Suppose, now, that the algorithm accepts. Then, we have $\|\vv\|_2 > \frac{3\epsilon}{4\wnorm}$ and (with probability at least $5/6$) we have
    \[
        \ex_{\Djoint}\left[ \frac{\wnorm}{\|\vv\|_2}\, z(\vv\cdot \x) \right] = \frac{\wnorm}{\|\vv\|_2} \vv\cdot \ex[z\x] \ge \epsilon/4
    \]
    since $\|\ex_S[z\x] - \ex_\Djoint[z\x]\|_2 \le \frac{\epsilon}{8\wnorm}$. This concludes the proof.
\end{proof}

\begin{proposition}\label{proposition:ell-1-p-concept}
    Let $\Djoint$ be a distribution over $\R^d\times \{0,1\}$ and $p:\R^d\to [0,1]$. Consider the distribution $\Djoint_p$ over $\R^d\times\{0,1\}\times\{0,1\}$, which is formed by drawing samples $(\x,y)$ from $\Djoint$ and, given $\x$, forming $y_p$ by drawing a conditionally independent Bernoulli random variable with parameter $p(\x)$. Then we have
    \[
        \elloneerror(p) = \ex_{(\x,y)\sim\Djoint}[|y-p(\x)|] = \pr_{(\x,y,y_p) \sim \Djoint_p}[y\neq y_p]
    \]
\end{proposition}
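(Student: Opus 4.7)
The plan is a direct computation using the tower property of expectation: I would condition on $(\x, y)$ and compute the conditional probability that $y \neq y_p$, then show pointwise that it equals $|y - p(\x)|$.

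First I would invoke the conditional independence structure of $\Djoint_p$. Since $y_p$ is drawn from $\B{p(\x)}$ independently of $y$ given $\x$, we have $\P[y_p = 1 \mid \x, y] = p(\x)$ and $\P[y_p = 0 \mid \x, y] = 1 - p(\x)$ regardless of $y$. Then I would do a two-case analysis on the binary label $y$. When $y = 0$, $\P[y \neq y_p \mid \x, y] = \P[y_p = 1 \mid \x] = p(\x)$, and indeed $|0 - p(\x)| = p(\x)$ since $p(\x) \in [0,1]$. When $y = 1$, $\P[y \neq y_p \mid \x, y] = \P[y_p = 0 \mid \x] = 1 - p(\x)$, which equals $|1 - p(\x)|$ for the same reason. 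Combining both cases, $\P[y \neq y_p \mid \x, y] = |y - p(\x)|$ almost surely.

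Finally I would take the expectation over $(\x, y) \sim \Djoint$ on both sides and apply the law of total probability to conclude $\P_{(\x, y, y_p) \sim \Djoint_p}[y \neq y_p] = \ex_{(\x, y) \sim \Djoint}[|y - p(\x)|] = \elloneerror(p)$. There is no real obstacle here, since the only ingredient is the explicit form of $|y - p(\x)|$ when $y \in \{0, 1\}$ and $p(\x) \in [0, 1]$, together with the construction of $\Djoint_p$; the main thing to be careful about is just to note that the absolute value can be dropped in each case because $p(\x)$ lies in the unit interval.
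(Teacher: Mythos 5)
Your proposal is correct and is essentially the same argument as the paper's: both proofs reduce $|y - p(\x)|$ for binary $y$ to the cases $p(\x)$ (when $y=0$) and $1-p(\x)$ (when $y=1$), invoke the conditional independence of $y_p$ and $y$ given $\x$, and conclude by taking expectations; you condition on $(\x,y)$ while the paper conditions only on $\x$ and writes the disagreement probability as $\pr[y_p=0\mid\x]\pr[y=1\mid\x] + \pr[y_p=1\mid\x]\pr[y=0\mid\x]$, which is only a cosmetic difference.
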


\begin{proof}
    Since over $\Djoint_p$, $y$ and $y_p$ are conditionally independendent, we have 
\begin{align*}
    \elloneerror(p) = \ex[|y-p(\x)|] &= \ex[(1-p(\x))\ind\{y=1\} + p(\x)\ind\{y=0\}]\\
    &=\ex_\x\Bigr[\pr[y_p = 0 | \x] \pr[y=1 | \x] + \pr[y_p = 1 | \x] \pr[y=0 | \x]\Bigr]\\
    &=\ex_x[\pr[y\neq y_p | \x]] = \pr[y\neq y_p]
\end{align*}
\end{proof}

\section{Proofs from Section \ref{section:distortion-bounds}}

\subsection{Proof of Theorem \ref{corollary:linear-surrogate-squared-expected}}\label{appendix:proof-corollary-linear-surrogate-squared-expected}

To prove Theorem \ref{corollary:linear-surrogate-squared-expected}, we first prove the following more general theorem. Theorem \ref{corollary:linear-surrogate-squared-expected} may then be easily recovered from this by setting $\mH = \glmclass_{\g', \wnorm}$ and observing that $\f'(\g'(\w\cdot \x)) = \w\cdot \x$, since $\g'$ is invertible.

\begin{theorem}[Squared Error Minimization through Distorted Matching Loss Minimization]\label{theorem:surrogate-squared-expected}
    Let $\Djoint$ be a distribution over $\R^d\times [0,1]$, let $0<\alpha\le \beta$ and let $(\f,\g)$ be a pair of Fenchel-Legendre dual functions such that $g':\R\to\R$ is continuous, non-decreasing and $\f':\range(\g')\to \R$ is $[\frac{1}{\beta},\frac{1}{\alpha}]$ bi-Lipschitz. Let $\epsilon>0$ and $\mH\subseteq \{\R^d\to \range(\g')\}$. Assume that for a predictor $p:\R^d \to \range(\g')$ we have
    \begin{equation}\label{equation:assumption-theorem-surrogate-squared-expected-appendix}
        \L_{\g}(\f'\circ p\,;\Djoint) \le \min_{h\in \mH}\L_{\g}(\f'\circ h\,;\Djoint) + \epsilon
    \end{equation}
    Then, for the predictor $p$, we also have: $\elltwoerror(p) \le \frac{\beta}{\alpha} \cdot \min_{h\in\mH} \elltwoerror(h) + 2\beta \epsilon$. 
\end{theorem}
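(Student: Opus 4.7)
The plan is to reduce the claim to the pointwise distortion bound of Lemma \ref{lemma:surrogate-squared-pointwise} by taking expectations. The crucial observation is that, since $g'$ is bi-Lipschitz with positive lower slope $\alpha > 0$ and is defined on all of $\R$, it is strictly increasing and surjective, so $\range(g') = \R$; in particular both the label $y \in [0,1]$ and the predictor's output $p(\x) \in \range(g')$ lie in the domain on which Lemma \ref{lemma:surrogate-squared-pointwise} applies.

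First, I would introduce the ``Bayes'' shift $\L_g^\star := \ex_{(\x,y)\sim D}[\ell_g(y, f'(y))]$ and note that for any function $h:\R^d \to \range(g')$, applying Lemma \ref{lemma:surrogate-squared-pointwise} pointwise and taking expectations gives
\[
    \L_g(f' \circ h\,;D) - \L_g^\star \;=\; \ex[\Bregman_f(y, h(\x))] \;\in\; \left[\,\tfrac{1}{2\beta}\elltwoerror(h),\; \tfrac{1}{2\alpha}\elltwoerror(h)\,\right].
\]
Thus subtracting $\L_g^\star$ from both sides of the hypothesis \eqref{equation:assumption-theorem-surrogate-squared-expected-appendix} translates the matching-loss comparison into a Bregman comparison:
\[
    \ex[\Bregman_f(y, p(\x))] \;\le\; \min_{h \in \mH}\,\ex[\Bregman_f(y, h(\x))] + \epsilon.
\]

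Next, I would pick $h^\star \in \argmin_{h \in \mH} \elltwoerror(h)$ and note that, since $h^\star \in \mH$, the previous display yields
\[
    \ex[\Bregman_f(y, p(\x))] \;\le\; \ex[\Bregman_f(y, h^\star(\x))] + \epsilon.
\]
Applying the lower bound from Lemma \ref{lemma:surrogate-squared-pointwise} to the left-hand side and the upper bound to the right-hand side chains into
\[
    \tfrac{1}{2\beta}\,\elltwoerror(p) \;\le\; \ex[\Bregman_f(y, p(\x))] \;\le\; \tfrac{1}{2\alpha}\,\elltwoerror(h^\star) + \epsilon,
\]
and multiplying through by $2\beta$ gives exactly the claimed bound $\elltwoerror(p) \le \tfrac{\beta}{\alpha}\min_{h \in \mH}\elltwoerror(h) + 2\beta\epsilon$.

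There is essentially no difficult step here: all the real work has been packaged into Lemma \ref{lemma:surrogate-squared-pointwise}, and the proof is just the observation that the optimization hypothesis transports, via that lemma, between matching losses and squared losses up to the distortion factor $\beta/\alpha$. The only subtlety is ensuring that the pointwise inequality can be applied to $y \in [0,1]$; this is handled by the bi-Lipschitz assumption, which forces $\range(g') = \R$ so that there are no boundary issues. If one wanted to extend the argument beyond the bi-Lipschitz setting (e.g.\ for the sigmoid), this is exactly where the complication would arise, and indeed that is what motivates the separate, concentration-dependent treatment in Section \ref{section:stronger-bounds}.
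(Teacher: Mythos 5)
Your proof is correct and takes essentially the same route as the paper's: both apply the pointwise distortion bound of Lemma \ref{lemma:surrogate-squared-pointwise} twice --- the lower bound to $p$ and the upper bound to the squared-error minimizer $h^\star \in \mH$ --- with the Bayes term $\ex[\ell_\g(y,\f'(y))]$ cancelling, then use the hypothesis \eqref{equation:assumption-theorem-surrogate-squared-expected-appendix} to chain the two. One small inaccuracy in a side remark: in this statement only $\f'$ is assumed bi-Lipschitz on $\range(\g')$ (not $\g'$ itself), so $\range(\g')=\R$ is neither implied nor needed; what makes the lemma applicable is simply that $p$ and every $h\in\mH$ take values in $\range(\g')$ by hypothesis, and like the paper you apply it to $y\in[0,1]$ without further comment.
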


\begin{proof} We apply Lemma \ref{lemma:surrogate-squared-pointwise} with $y\leftarrow y$ and $p\leftarrow p(\x)$ and take expectations over $\Djoint$ on both sides. We have that
\[
    \elltwoerror(p) \le 2\beta\cdot \ex \ell_\g(y, \f'(p(\x)) ) - 2\beta\cdot  \ex \ell_{\g}(y, {\f'}(y))
\]

Therefore, we can bound the squared error of $p$ as follows.
\begin{align*}
    \elltwoerror(p) &\le 2\beta \cdot \L_{\g}(\f'\circ p\,;\Djoint) - 2\beta\cdot Q^* \\
    &\le 2\beta \cdot \L_{\g}(\f'\circ h\,;\Djoint) - 2\beta\cdot Q^* \tag{by assumption, for any $h\in\mH$}
\end{align*}
where $Q^* = \ex \ell_\g(y, \f'(y))$.

We now apply Lemma \ref{lemma:surrogate-squared-pointwise} again with $y\leftarrow y$ and $p\leftarrow h(\x)$ and we similarly have
\begin{align*}
    \ex \ell_\g(y, \f'\circ h(\x) ) - Q^* &\le \frac{1}{2\alpha}\elltwoerror(h)
\end{align*}
Therefore, for any $h\in\mH$, we have, in total: $\elltwoerror(p) \le \frac{\beta}{\alpha} \elltwoerror(h) + 2\beta \epsilon$.
\end{proof}

We first prove Lemma \ref{lemma:surrogate-squared-pointwise}, which we restate here for convenience.

\begin{lemma}\label{lemma:surrogate-squared-pointwise-appendix}
    Assume ${f'}$ is $[1/\beta,1/\alpha]$ bi-Lipschitz and differentiable on all except from a finite number of points on any bounded interval. Then for any $y, p\in\range(g')$ we have
    \[
        \ell_{g}(y,{f'}(p)) - \ell_{g}(y,{f'}(y)) = D_{f}(y,p) \in \left[ \frac{1}{2\beta}(y-p)^2 , \frac{1}{2\alpha} (y-p)^2 \right]
    \]
\end{lemma}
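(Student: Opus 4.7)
The plan is to split the lemma into its two assertions and handle each in turn: first the identity
\[
    \ell_{g}(y,f'(p)) - \ell_{g}(y,f'(y)) = \Bregman_{f}(y,p),
\]
and then the sandwich bound on $\Bregman_{f}(y,p)$.

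For the identity, I would compute directly from the definitions. Since $g(t) = \int_{0}^{t}g'(\tau)\,d\tau$, the matching loss simplifies to $\ell_{g}(y,t) = g(t) - yt$. Substituting $t = f'(p)$ and $t = f'(y)$ and subtracting gives
\[
    \ell_{g}(y,f'(p)) - \ell_{g}(y,f'(y)) = g(f'(p)) - g(f'(y)) - y\bigl(f'(p) - f'(y)\bigr).
\]
Now I apply the Fenchel identity \eqref{equation:fenchel-inverse-gof}, which gives $g(f'(r)) = r f'(r) - f(r)$ for every $r \in \range(g')$, to both $r=p$ and $r=y$. After cancellation the $f'(y)$ terms vanish and one is left with $f(y) - f(p) - (y-p)f'(p)$, which is exactly $\Bregman_{f}(y,p)$ by definition.

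For the sandwich bound, the key input is that a $[1/\beta,1/\alpha]$ bi-Lipschitz function $f'$ is absolutely continuous, so it is differentiable almost everywhere, and wherever the derivative exists it satisfies $f''(u) \in [1/\beta,1/\alpha]$. Under the hypothesis that $f'$ fails to be differentiable at only finitely many points on any bounded interval, the fundamental theorem of calculus applies to $f'$ itself. Assuming first that $y \geq p$, this lets me write
\[
    \Bregman_{f}(y,p) \;=\; \int_{p}^{y}\bigl(f'(s) - f'(p)\bigr)\,ds \;=\; \int_{p}^{y}\!\!\int_{p}^{s} f''(u)\,du\,ds.
\]
Sandwiching $f''(u)$ between $1/\beta$ and $1/\alpha$ and carrying out the resulting elementary integrals gives
\[
    \frac{1}{2\beta}(y-p)^{2} \;\leq\; \Bregman_{f}(y,p) \;\leq\; \frac{1}{2\alpha}(y-p)^{2}.
\]
The case $y < p$ is symmetric: reversing the limits of integration yields the same two-sided bound on $(y-p)^{2}$, since both the Bregman divergence and $(y-p)^{2}$ are manifestly nonnegative.

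The only genuine subtlety is justifying the integral representation of $\Bregman_{f}$ without assuming $f'$ is everywhere differentiable; but this is handled by the bi-Lipschitz (hence absolutely continuous) hypothesis on $f'$, which ensures the fundamental theorem of calculus applies despite the finitely many exceptional points. Everything else is routine algebraic manipulation of the Fenchel dual identity and a standard Taylor-with-integral-remainder style estimate.
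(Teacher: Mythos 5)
Your proof is correct, and the first half (the identity $\ell_{g}(y,f'(p)) - \ell_{g}(y,f'(y)) = \Bregman_{f}(y,p)$ via the Fenchel identity $g(f'(r)) = r f'(r) - f(r)$ applied at $r=p$ and $r=y$) is essentially the same computation as in the paper. Where you diverge is in justifying the quadratic sandwich: the paper writes $\Bregman_f(y,p)$ as a Lagrange-form Taylor remainder, $\frac12\psi''(\xi)(y-p)^2$, for smooth surrogates $\psi$ of $f$, and then argues via a loosely described ``smoothening'' step that the approximation error $\gamma_\psi$ can be driven to zero to handle the finitely many points where $f''$ fails to exist. You instead use the integral-remainder representation $\Bregman_f(y,p) = \int_p^y\bigl(f'(s)-f'(p)\bigr)\,ds = \int_p^y\int_p^s f''(u)\,du\,ds$, relying on the fact that a bi-Lipschitz $f'$ is absolutely continuous, differentiable a.e.\ with $f''\in[1/\beta,1/\alpha]$ wherever it exists, and satisfies the fundamental theorem of calculus. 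This buys you two things: the argument is self-contained (no appeal to an unspecified smoothing construction), and it shows the hypothesis that $f'$ is differentiable off a finite set is not actually needed --- Lipschitz continuity of $f'$ alone suffices, since sets of measure zero are harmless under the integral representation. The paper's route, by contrast, stays at the level of pointwise Taylor expansion and so must explicitly patch the exceptional points. Both yield exactly the stated bounds, including in the case $y<p$, which you correctly handle by reversing orientation.
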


\begin{proof}
    We first show that $\ell_{g}(y,{f'}(p)) - \ell_{g}(y,{f'}(y)) = D_{f}(y,p)$. In particular, we have
    \[
        g(f'(p)) = f'(p) g'(f'(p)) - f(g'(f'(p))) = pf'(p) - f(p)\,,
    \]
    since $f'(p)\in \range(f')$ and we know that $g(t) = t g'(t) - f(g'(t))$ for any $t\in\range(f')$ as well as $g'(f'(p)) = p$ for any $p\in\range(g')$. Therefore, we have
    \begin{align*}
        \ell_{g}(y,{f'}(p)) - \ell_{g}(y,{f'}(y)) &= g(f'(p)) - yf'(p) - g(f'(y)) + y f'(y)\\
        &= pf'(p) - f(p) - yf'(p) -yf'(y) + f(y) + y f'(y) = \\
        &= f(y) - f(p) - (y-p) f'(p) = D_{f}(y,p)\,.
    \end{align*}

    Let $\psi:\range(g')\to \R$ be such that $\psi'(p) = f'(p)$ and $\psi'$ is differentiable on the open interval between $y$ and $p$, with $\psi''(\xi) \in [1/\beta, 1/\alpha]$ for any $\xi$ between $y$ and $p$.
    Let $\gamma_y:= f(y) - \psi(y), \gamma_p := f(p) - \psi(p)$ and $\gamma_\psi := 2\max\{|\gamma_y|, |\gamma_p|\}$. Then we have that
    \[
        D_{f}(y,p) = \psi(y) - \psi(p) - (y-p) \psi'(p) + (\gamma_y-\gamma_p) =  \frac{1}{2} \psi''(\xi) (y-p)^2+(\gamma_y-\gamma_p)
    \]
    \[
        D_{f}(y,p) \in \left[\frac{1}{2\beta}  (y-p)^2 - 2\gamma_\psi , \frac{1}{2\alpha}  (y-p)^2 + 2\gamma_\psi \right]\,,
    \]
    for any $\psi$ as defined above (say $\psi\in\Psi$). In particular, we have
    \[
        D_{f}(y,p) \le \frac{1}{2\alpha}  (y-p)^2 + 2 \inf_{\psi\in\Psi}\gamma_\psi \text{ and }
    \]
    \[
        D_{f}(y,p) \ge \frac{1}{2\beta}  (y-p)^2 - 2 \inf_{\psi\in\Psi}\gamma_\psi
    \]
    Since, we have only a finite number of points where the derivative is not well defined, a simple smoothening technique may give us $\Psi$ such that $\inf_{\psi\in\Psi}\gamma_\psi = 0$.
\end{proof}

\section{Proofs from Section \ref{section:learning-sims}}\label{appendix:proofs-of-section-learning-sims}

\subsection{Proof of Theorem \ref{theorem:omnipredictors}}

We first define a boundedness property which we use in order to apply the results from \cite{GopalanHKRW23}. The property states that the activation function (the partial inverse of the link function) must either have a range that covers all possible labels, or has a range whose closure covers all possible labels and the rate with which the labels are covered as we tend to the limits of the domain is at least polynomial. For example, the sigmoid activation tends to $1$ (resp. $0$) exponentially fast as its argument increases (resp. decreases).

\begin{definition}[Bounded Functions]\label{definition:bounded-functions}
    Let $u:(0,1)\to \R$ be a non-decreasing function defined on the interval $(0,1)$. For $\fbound,\speedparam\ge 0$, we say that $u$ is $(\fbound,\speedparam)$-bounded on $[0,1]$ if for any $\epsilon>0$, there are $r_0\le r_1\in[0,1]$ such that if we let $u(r_i) = \lim_{r\to r_i} u(r),$ $i=0,1$ then
    \begin{align*}
        \max\{-u(r_0), u(r_1)\} &\le \fbound\left(\frac{1}{\epsilon}\right)^{\speedparam}  \\
        (1-r_1)(u(r) - u(r_1)) &\le \epsilon \text{ for } r\ge r_1 \text{ and} \\
        r_0 (u(r_0) - u(r)) &\le \epsilon \text{ for } r\le r_0
    \end{align*}
\end{definition}

We restate a slightly more quantitative version of Theorem \ref{theorem:omnipredictors} here for convenience.

\begin{theorem}[Omnipredictors for Matching Losses, combination of results in \cite{GopalanHKRW23}]\label{theorem:omnipredictors-appendix}
    Let $\Djoint$ be a distribution over $\R^d\times [0,1]$ whose marginal on $\R^d$ has $\dparam$-bounded second moments. There is an algorithm that, given sample access to $\Djoint$, with high probability returns a predictor $p:\R \to (0,1)$ with the following guarantee. For any pair of Fenchel-Legendre dual functions $(\f,\g)$ such that $\g':\R\to\R$ is continuous, non-decreasing and $L$-Lipschitz, and $\f'$ is $(\fbound,\speedparam)$-bounded (see Definition~\ref{definition:bounded-functions}), $p$ satisfies
    \[
        \L_\g(\f'\circ p \;;\Djoint) \le \min_{\|\w\|_2 \le \wnorm}\L_\g(\w \,;\Djoint) + \epsilon.
    \]
    The algorithm requires time and sample complexity $\poly(\dparam, \wnorm, L, \fbound, \frac{1}{\epsilon}, \frac{1}{\epsilon^\speedparam})$.
    
\end{theorem}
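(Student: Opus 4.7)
The plan is to invoke the calibrated multiaccuracy algorithm of \cite{GopalanHKRW23} (e.g.\ their $\multiA$/$\multiC$ procedure) to produce a single predictor $p:\R^d\to(0,1)$ that is simultaneously $\epsilon_1$-multiaccurate with respect to every linear function of norm at most $\wnorm$ and $\epsilon_2$-calibrated, for appropriate $\epsilon_1,\epsilon_2$ that we choose as polynomial functions of $\epsilon,\wnorm,\dparam,L,\fbound$. Since the marginal has $\dparam$-bounded second moments, the bounded-norm linear test class has bounded second moment, so standard concentration (Chebyshev, as in Proposition~\ref{proposition:weak-learner}) yields uniform convergence, and the iterative algorithm terminates in $\poly$ many rounds by the usual potential argument tracking the squared error of $p$ against $y$.

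Given such a $p$, we next show that $q\defeq \f'\circ p$ attains matching-loss value within $\epsilon$ of the best linear predictor. The key step is the convexity inequality for $\ell_\g(y,\cdot)$: for any $\w$ with $\|\w\|_2\le \wnorm$,
\[
\ell_\g(y,q(\x))-\ell_\g(y,\w\cdot\x)\le (\g'(q(\x))-y)(q(\x)-\w\cdot\x) = (p(\x)-y)\bigl(\f'(p(\x))-\w\cdot\x\bigr),
\]
using $\g'(\f'(p))=p$ from \eqref{equation:fenchel-inverse-fog-plus}. Taking expectations gives
\[
\L_\g(q;\Djoint)-\L_\g(\w;\Djoint)\le \ex\bigl[(p(\x)-y)\f'(p(\x))\bigr]-\ex\bigl[(p(\x)-y)(\w\cdot\x)\bigr].
\]
The second term is controlled by $\wnorm\cdot\epsilon_1$ via the multiaccuracy of $p$ against linear functions. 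For the first term, we condition on the level set of $p$ and use the tower rule: $\ex[(p(\x)-y)\f'(p(\x))] = \ex_v[\f'(v)\cdot\ex[p(\x)-y\mid p(\x)=v]]$, which is small by calibration \emph{provided} $\f'$ is bounded on $\supp(p)$.

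Since $\f'$ can blow up near the boundary of $(0,1)$, this is where the $(\fbound,\speedparam)$-boundedness hypothesis of Definition~\ref{definition:bounded-functions} is used. We truncate $p$ to lie in $[r_0,r_1]$, where $\f'$ is bounded by $\fbound(1/\epsilon)^{\speedparam}$, and argue that the contribution of samples with $p(\x)\notin[r_0,r_1]$ to both the matching loss and the calibration/multiaccuracy error is at most $O(\epsilon)$ by the second and third inequalities in Definition~\ref{definition:bounded-functions} (which bound the tail mass weighted by $\f'$). Thus on the truncated region, the calibration term is bounded by $\fbound(1/\epsilon)^{\speedparam}\cdot\epsilon_2$. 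Choosing $\epsilon_1=\epsilon/(2\wnorm)$ and $\epsilon_2=\epsilon^{1+\speedparam}/(2\fbound)$ (up to constants) completes the bound. The main obstacle is precisely this truncation bookkeeping: ensuring that the truncation error, the calibration error scaled by $\|\f'\|_\infty$ on the truncated region, and the sample-complexity blowup from the boundedness parameters all compose into the stated $\poly(\dparam,\wnorm,L,\fbound,1/\epsilon,1/\epsilon^{\speedparam})$ bound without any circular dependence.
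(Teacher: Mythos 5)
Your overall route matches the paper's: obtain a calibrated and multiaccurate predictor from the algorithm of \cite{GopalanHKRW23}, then convert calibrated multiaccuracy into simultaneous matching-loss optimality via convexity of $\ell_\g(y,\cdot)$, splitting the gradient term into a calibration piece and a multiaccuracy piece, and handling the blow-up of $\f'$ near $\{0,1\}$ by truncation using Definition~\ref{definition:bounded-functions}. (The paper imports this second step wholesale as Theorem A.4 of \cite{GopalanHKRW23}, and its only new work there is verifying that $(\fbound,\speedparam)$-boundedness yields their ``approximately optimal'' post-processing condition via exactly the clipped $\hat{\f}'$ you describe.) However, two points in your plan are genuine gaps relative to what the theorem requires. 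First, the calibrated-multiaccuracy algorithm you invoke is guaranteed in \cite{GopalanHKRW23} only for \emph{binary} labels, whereas $\Djoint$ here has labels in $[0,1]$. The paper bridges this by replacing each label $y$ with a conditionally independent Bernoulli($y$) label $y'$ and observing that $\ell_\g(y,t)=\g(t)-yt$ is affine in $y$, so $\L_\g(c\,;\Djoint')=\L_\g(c\,;\Djoint)$ for every $c$ and the guarantee transfers back to $\Djoint$. Your proposal never addresses real-valued labels, and without some such reduction the invocation of their Theorem 7.7 is not justified.

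Second, ``standard concentration yields uniform convergence and the usual potential argument terminates'' understates the adaptation to marginals with only $\dparam$-bounded second moments: the analysis in \cite{GopalanHKRW23} assumes the features are bounded, and the paper must modify their Lemma 7.2 (the multiaccuracy guarantee picks up an extra $\wnorm\sqrt{\dparam\delta}$ term via Cauchy--Schwarz, since one can only bound $\ex[(\w\cdot\x)^2]\le\wnorm^2\dparam$) and their Lemma 7.6/Theorem 7.7 (with the weak learner of Proposition~\ref{proposition:weak-learner} run at tolerance $\epsilon_3$, the potential drop becomes $\sigma\epsilon_3/2-\wnorm^2\dparam\sigma^2$, forcing a step size $\sigma$ polynomially smaller than $\epsilon_3$ rather than equal to it). These changes cost only polynomial overhead, so your parameter choices survive, but they are the substance of the adaptation and need to be carried out, not assumed. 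A minor further note: with the coordinate-wise multiaccuracy definition in this paper your linear term is bounded by $\|\w\|_1\epsilon_1$ rather than $\wnorm\epsilon_1$; what Theorem 7.7 of \cite{GopalanHKRW23} actually supplies is multiaccuracy with respect to the class of norm-$\wnorm$ linear functions (plus constants), which gives the bound you want directly.
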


\begin{proof}[Proof of Theorem \ref{theorem:omnipredictors}]
    Suppose first that the marginal of $\Djoint$ on $\R^d$ is supported on the unit ball $\Ball_d$ and that the labels are binary. Then, the result would follow from Theorems 7.7 and A.4 of \cite{GopalanHKRW23}. In particular, Theorem 7.7 states that given access to a weak learner with the specifications of Proposition \ref{proposition:weak-learner}, there is an efficient algorithm that computes an $\epsilon_1$-calibrated and $(\C,\epsilon_1)$-multiaccurate predictor $p$, where the notions of calibration and multiaccuracy originate to the literature of fairness and are defined, e.g., in Definitions 3.1 and 3.2 of \cite{GopalanHKRW23} and $\C = \{\x\to \w\cdot \x\;|\; \|\w\|_2\le \wnorm\}\cup\{\x\to 1\}$ (the class $\C$ is bounded as long as $\|\x\|_2\le 1$ almost surely). Theorem A.4 states that for $\epsilon_2>0$, any $\epsilon_1$-calibrated and $(\C,\epsilon_1)$-multiaccurate predictor $p$ minimizes simultaneously the matching loss corresponding to any pair $(\f,\g)\in\fenchelpairs$ (where $\f$ is $(\fbound,\speedparam)$-bounded) up to error 
    \[
        \fbound(1/\epsilon_2)^\speedparam \epsilon_1 + \epsilon_1 + \epsilon_2
    \]   
    The expression above is formed by proving that any pair $(\f,\g)\in \fenchelpairs$ has the property that $\f'$ is $(\epsilon_2,\fbound(1/\epsilon_2)^\speedparam)$-approximately optimal (as per the Definition A.1 of \cite{GopalanHKRW23}), for any $\epsilon_2>0$. In particular, we would like to show that for any $\epsilon_2>0$ there exists $\hat{f}'$ such that the following is true for any $p\in[0,1]$
    \begin{align*}
        \ell_\g(r, \hat{\f}'(r)) &\le \ell_\g(r,\f'(r)) + \epsilon_2 \\
        |\hat{\f}'(r)| &\le \fbound\cdot(1/\epsilon_2)^\speedparam
    \end{align*}
    We may pick $\hat{\f}'$ as follows (for $r_0\le r_1$ as given by Definition \ref{definition:bounded-functions}).
    \[
        \hat{\f}'(r) = 
        \begin{cases}
            \f'(r), \text{ if }r\in[r_0,r_1]\\
            \f'(r_0), \text{ if }r<r_0 \\
            \f'(r_1), \text{ if }r>r_1
        \end{cases}
    \]
    The desired result follows from using the expression for $\ell_\g$, the convexity of $\g$ (since $\g'$ is non decreasing) and the guarantees of Definition \ref{definition:bounded-functions}.

    However, we only assume that the marginal distribution is not bounded and we, therefore, need to make certain modifications to the proof of their Theorem 7.7. In particular, the boundedness assumption is used in the proofs of Lemma 7.2, Lemma 7.6 and Theorem 7.7 in \cite{GopalanHKRW23}. For Lemma 7.2, the guarantee for $p_2$ changes to $(\C,\alpha + \wnorm\sqrt{\dparam\delta})$-multiaccuracy, by using a Cauchy-Schwarz inequality and bounding $\ex[(\w\cdot \x)^2]$ by $\wnorm^2\cdot \dparam$. Note that $\delta$ here is a parameter they use within their algorithm and it can be picked quadratically smaller (resulting into a polynomial increase of the time and sample complexity of their algorithm). For Lemma 7.6, one needs to pick a step size that is polynomially smaller than the guarantee that the weak learner provides. In particular, within the proof of their Lemma 7.6 in their appendix, if the weak learner of our Proposition \ref{proposition:weak-learner} is run with $\epsilon \leftarrow \epsilon_3$, then one gets (within their proof)
    \begin{align*}
        l_2(p^*,p_t)^2-l_2(p^*,p_{t+1})^2 \ge \frac{\sigma\cdot \epsilon_3}{2} - B^2\dparam \sigma^2
    \end{align*}
    If $\sigma$ (the step size of their Algorithm 1 for multiaccuracy) is picked small enough with respect to $\epsilon_3$ and parameters $B^2,\dparam$, then the quantity of interest $l_2(p^*,p_t)^2-l_2(p^*,p_{t+1})^2$ has a good enough lower bound. Note that in their original algorithm, $\sigma$ was picked equal to $\epsilon_3$ and this is why $\epsilon_3$ (or another corresponding parameter) does not appear in their proofs. Once more, the updated choice of $\sigma$ generates a polynomial overhead in time and sample complexity. The aforementioned modifications are sufficient for the modified version of Theorem 7.7 and its proof.

    The final technical complication we need to address is that their algorithm is guaranteed to work only given binary labels. We can, however, form binary labels as follows. Let $(\x,y)$ be drawn from $\Djoint$. We have that $y\in[0,1]$. We draw a conditionally (on $y$) independent Bernoulli random variable $y'$ with probability of success $y$, forming the distribution $\Djoint'$ over $\R^d\times \{0,1\}$. We run the algorithm of \cite{GopalanHKRW23} on $\Djoint'$ and obtain a predictor $p$ such that 
    \[
        \L_\g(\f'\circ p \;;\Djoint') \le \min_{\|\w\|_2\le \wnorm}\L_\g(\w \,;\Djoint') + \epsilon\,,\, 
    \]
    for any $(\f,\g)$ as described in the theorem statement.
    We, additionally, have that for any $c:\R^d\to \R$ 
    \begin{align*}
        \L_\g(c \,;\Djoint') &= \ex_{\x,y'}[\g(c( \x)) - y'c(\x)] \\
        &= \ex_{\x}\left[\g(c(\x)) - \ex_{y'}\left[y'\,\bigr|\,\x\right]c(\x)\right] \\
        &= \ex_{\x}\left[\g(c(\x)) - \ex_{y}\left[\ex_{y'}\left[y'\bigr|\,y,\x\right]\,\biggr|\,\x\right]c(\x)\right] \\
        &= \ex_{\x}\left[\g(c(\x)) - \ex_{y}\left[\ex_{y'}\left[y'\bigr|\,y\right]\,\biggr|\,\x\right]c(\x)\right] \tag{$y'$ is independent from $\x$ given $y$}\\
        &= \ex_{\x}\left[\g(c(\x)) - \ex_{y}\left[y\,\bigr|\,\x\right]c(\x)\right] \tag{$y'$ is Bernoulli($y$)} \\
        &= \ex_{\x,y}[\g(c(\x)) - yc(\x)] = \L_\g(c\,;\Djoint)
    \end{align*}
    This concludes the proof of Theorem \ref{theorem:omnipredictors}.
\end{proof}

\subsection{Proof of Lemma \ref{lemma:general-activations}}\label{appendix:proof-of-lemma-general-activations}

    We first apply Theorem \ref{corollary:linear-surrogate-squared-expected} with $\g'\leftarrow \phi'$ to obtain that for $\phi'_{\w}(\x)=\phi'(\w\cdot \x)$, we have
    \[
        \elltwoerror(p) \le \frac{\beta}{\alpha}\, \elltwoerror(\phi'_{\w^*}) + 2\beta\epsilon \tag{since inequality holds for $\w\in\ws$}
    \]
    Moreover, we have
    \begin{align*}
        \elltwoerror(\phi'_{\w^*}) &= \ex\left[\Bigr(y-\phi'(\w^*\cdot \x)\Bigr)^2\right] \\
        &= \ex\left[\Bigr(y-\g'(\w^*\cdot \x)+ \g'(\w^*\cdot \x) -\phi'(\w^*\cdot \x)\Bigr)^2\right] \\
        &\le 2\ex\left[\Bigr(y-\g'(\w^*\cdot \x)\Bigr)^2\right] + 2\ex\left[\Bigr(\g'(\w^*\cdot \x) -\phi'(\w^*\cdot \x)\Bigr)^2\right] \\
        & = 2\,\opt_\g + 2\ex\left[\Bigr(\g'(\w^*\cdot \x) -\phi'(\w^*\cdot \x)\Bigr)^2\right]
    \end{align*}
    This concludes the proof of Lemma \ref{lemma:general-activations}.

\section{Proofs from Section \ref{section:stronger-bounds}}\label{appendix:proofs-of-section-stronger-bounds}

\subsection{Proof of Theorem \ref{theorem:logistic-squared-expected}}

In the case we consider here, $\g'$ is the sigmoid activation, i.e., $\g'(t) = (1+e^{-t})^{-1}$ for any $t\in \R$. In particular, the pointwise surrogate loss we consider satisfies 
\[
    \ell_\g(y,\f'(p)) = y\ln\frac{1}{p} + (1-y) \ln\frac{1}{1-p} - \ln 2\,,
\]
for any $y\in[0,1]$ and $p\in(0,1)$. We may extend Lemma \ref{lemma:kl-and-squared-loss} to also capture $y\in\{0,1\}$, by defining $\ell_\g(0,\f'(0)) = \ell_\g(1,\f'(1)) = -\ln 2$ (the inequality would hold under this definition). Hence, following a similar procedure as the one used for proving Theorem \ref{theorem:surrogate-squared-expected}, we obtain the following by applying Lemma \ref{lemma:kl-and-squared-loss}
\begin{align}
    \elltwoerror(p) &\le 2\left(\L_{\g}(\f'\circ p) - \ex[\ell_\g(y,\f'(y))]\right)\label{equation:sigmoid-error-le-surr} \\
    \L_{\g}(\w^*) - \ex[\ell_\g(y,\f'(y))] &\le \ex\left[\frac{2}{\g'(\w^*\cdot \x) \vee (1-\g'(\w^*\cdot \x))} \cdot (y-\g'(\w^*\cdot \x))^2\right]\label{equation:sigmoid-surr-le-distorted-err} \\
    \L_{\g}(\f'\circ p) &\le \L_{\g}(\w^*) + \epsilon
\end{align}

Therefore, in order to prove Theorem \ref{theorem:logistic-squared-expected}, it is sufficient to provide a strong enough upper bound for the quantity of the right hand side of Equation \eqref{equation:sigmoid-surr-le-distorted-err} in terms of $\opt_\g$. We observe that
\[
    \frac{2}{\g'(\w^*\cdot \x)\vee (1-\g'(\w^*\cdot \x))} \le 4\exp({|\w^*\cdot \x|})\,, \text{ for any }\x\in\R^d
\]

It remains to bound the quantity $\ex[ e^{|\w^*\cdot \x|} (y - \g'_{\w}(\x))^2]$. Set $Q = e^{|\w^*\cdot \x|} (y - \g'_{\w}(\x))^2 $ ($Q$ is a random variable). Then for any $r\ge 0$ we have
    \begin{align*}
        \ex[Q] &= \ex[Q\cdot \ind\{|\w^*\cdot \x|\le r\}] + \ex[Q\cdot \ind\{|\w^*\cdot \x|> r\}] \\
        &\le e^r \ex[(y-\g'_{\w^*}(\x))^2\ind\{|\w^*\cdot \x|\le r\}] + \ex[e^{|\w^*\cdot \x|}\ind\{|\w^*\cdot \x|> r\}] \\
        & \le e^r\cdot \opt + \ex[e^{|\w^*\cdot \x|}\ind\{|\w^*\cdot \x|> r\}]
    \end{align*}
    To bound the quantity $\ex[e^{|\w^*\cdot \x|}\ind\{|\w^*\cdot \x|> r\}]$, consider $F(s) := \Pr[e^{|\w^*\cdot \x|} \ind\{|\w^*\cdot \x| > r\} \ge s]$. We have that
    \[
        F(s) = \ind\{s=0\}\Pr[|\w^*\cdot \x|\le r] + \Pr[|\w^*\cdot \x| \ge \max\{\ln s, r\}]\,.
    \]
    Since $\ex[e^{|\w^*\cdot \x|}\ind\{|\w^*\cdot \x|> r\}] = \int_{s=0}^{\infty} F(s) \, ds = \int_{s=0}^\infty \Pr[|\w^*\cdot \x| \ge \max\{\ln s, r\}]\, ds$, we obtain
    \begin{align*}
        \ex[Q] & = \int_{s=0}^\infty \Pr\left[|\w^*\cdot \x| \ge \max\{\ln s, r\}\right]\, ds \\
        & \le \int_{s=0}^{e^{r}}\Pr\left[|\hat{\w}^*\cdot \x| \ge \frac{r}{B}\right]\, ds + \int_{s=e^{r}}^\infty \Pr\left[|\hat{\w}^*\cdot \x| \ge \frac{\ln s}{B}\right]\, ds \tag{since $\|\w^*\|_2\le \wnorm$}\\
        & \le e^r \cdot e^{-({r}/{\wnorm})^2} + \int_{s=e^r}^{\infty} e^{-(\ln s / \wnorm)^2}\, ds\tag{see Def. \ref{definition:concentration}}\\
        & = e^r \cdot e^{-({r}/{\wnorm})^2} + e^r\cdot \int_{u=0}^{\infty} e^{u-(\frac{u+r}{\wnorm})^2}\, du \tag{define $u = \ln s - r$}\\
        &\le e^r \cdot e^{-({r}/{\wnorm})^2} + e^r\cdot e^{-({r}/{\wnorm})^2} \cdot \wnorm\cdot e^{\frac{\wnorm^2}{2}}\cdot \int_{u=0}^{\infty} e^{-(u-\frac{\wnorm}{2})^2}\, du \\
        &\le Ce^{{\wnorm^2}} e^r e^{-(r/\wnorm)^2}
    \end{align*}
    Therefore, in total, we have that $\elltwoerror(p) \le 8e^r \opt_\g + 8Ce^{\wnorm^2} e^r e^{-(r/\wnorm)^2} + 2 \epsilon$ and we may obtain Theorem \ref{theorem:logistic-squared-expected} by picking $r=\wnorm(\ln\frac{1}{\opt})^{1/2}$.

\subsection{Proof of Theorem \ref{theorem:logistic-absolute-expected}}

We first prove Lemma \ref{lemma:kl-and-absolute-loss}. We have that for any $y\in\{0,1\}$ and $p\in(0,1)$
\[
    \ell_\g(y,\f'(p)) = y\ln\frac{1}{p} + (1-y) \ln\frac{1}{1-p} - \ln 2 = \crossentropy(y,p) - \ln 2\,,
\]
where $\crossentropy(y,p)$ is the cross entropy function. It is sufficient to show that for $y \in \zo$ and $p \in \izo$, 
    \begin{align}
    \label{eq:ce}
        |y -p | \leq \crossentropy(y,p ) \leq 2|y -p|\logf{1}{p(1 -p)}
    \end{align}

Observe that
\begin{align}
\label{eq:c0}
\crossentropy(0, p) = \logf{1}{1- p} = \sum_{i=1}^\infty \frac{p^i}{i}
\end{align}
where the series on the right converges for all $p < 1$. 
We can also write
\begin{align}
\label{eq:c1}
    \crossentropy(1, p) = \logf{1}{p} = \sum_{i=1}^\infty \frac{(1- p)^i}{i}
\end{align}
with the series converging for $p >0$.

    For the lower bound, we observe the following inequalities hold for all $p \in [0,1]$
    \begin{align*}
        \crossentropy(0, p) & \geq p = |0 - p |\\
        \crossentropy(1, p) & \geq 1 - p = | 1 - p|.
    \end{align*}
    For the upper bound, we first prove the claim for $y=0$, where it states that
    \begin{align} 
    \crossentropy(0, p) = \logf{1}{1- p} \leq 2p\logf{1}{p(1 -p)}.
    \end{align}
    
    When $p \leq 1/2$ we can use Eq. \eqref{eq:c0} to bound
    \begin{align}
    \crossentropy(0, p) \leq \sum_{i=1}^\infty \frac{p^i}{i} \leq \sum_{i=1}^\infty p^i = \frac{p}{1- p} \leq 2p.
    \end{align}
    The bounds holds by observing that since $p(1 -p) \geq 1/4$,
    \[ \logf{1}{p(1-p)} \geq \log(4) \geq 1\] 
    
    When $p \geq 1/2$, we note that
    \begin{align*}
    \logf{1}{p(1 -p)} \geq \logf{1}{1- p} = \crossentropy(0, p)
    \end{align*}
    and $2p \geq 1$. Hence the bound holds in this case too.

     In the case where $y = 1$, the bound states that
     \[ \crossentropy(1, p) = \logf{1}{p} \leq 2(1 -p)\logf{1}{p(1 -p)}.\]
     This is implied by our bound for $y =0$ by taking $q = 1 -p$. This concludes the proof of Lemma \ref{lemma:kl-and-absolute-loss} and we are ready to prove Theorem \ref{theorem:logistic-absolute-expected}. The following are true

     \begin{align}
    \elloneerror(p) &\le \L_{\g}(\f'\circ p) - \ln 2\label{equation:sigmoid-errorone-le-surr} \\
    \L_{\g}(\w^*) - \ln 2 &\le 2\cdot \ex\left[\ln\left(\frac{1}{\g'(\w^*\cdot \x) \cdot (1-\g'(\w^*\cdot \x))}\right) \cdot |y-\g'(\w^*\cdot \x)|\right]\label{equation:sigmoid-surr-le-distorted-errone} \\
    \L_{\g}(\f'\circ p) &\le \L_{\g}(\w^*) + \epsilon
\end{align}

Similarly to the proof of Theorem \ref{theorem:logistic-squared-expected}, we observe that
\[
    \ln\left(\frac{1}{\g'(\w^*\cdot \x) \cdot (1-\g'(\w^*\cdot \x))}\right) \le \ln 4 + |\w^*\cdot \x|
\]

It remains to bound the quantity $\ex[ {|\w^*\cdot \x|} \cdot |y - \g'_{\w}(\x)|]$. Set $Q = |\w^*\cdot \x|\cdot |y - \g'_{\w}(\x)| $ ($Q$ is a random variable). Then for any $r\ge 0$ we have
    \begin{align*}
        \ex[Q] &= \ex[Q\cdot \ind\{|\w^*\cdot \x|\le r\}] + \ex[Q\cdot \ind\{|\w^*\cdot \x|> r\}] \\
        &\le r \ex[|y-\g'_{\w^*}(\x)|\cdot \ind\{|\w^*\cdot \x|\le r\}] + \ex[{|\w^*\cdot \x|}\cdot \ind\{|\w^*\cdot \x|> r\}] \\
        & \le r\cdot \opt + \ex[{|\w^*\cdot \x|}\cdot\ind\{|\w^*\cdot \x|> r\}]
    \end{align*}
    To bound the quantity $\ex[{|\w^*\cdot \x|}\cdot \ind\{|\w^*\cdot \x|> r\}]$, consider $F(s) := \Pr[{|\w^*\cdot \x|}\cdot \ind\{|\w^*\cdot \x| > r\} \ge s]$. We have that
    \[
        F(s) = \ind\{s=0\}\Pr[|\w^*\cdot \x|\le r] + \Pr[|\w^*\cdot \x| \ge \max\{s, r\}]\,.
    \]
    Since $\ex[{|\w^*\cdot \x|}\cdot \ind\{|\w^*\cdot \x|> r\}] = \int_{s=0}^{\infty} F(s) \, ds = \int_{s=0}^\infty \Pr[|\w^*\cdot \x| \ge \max\{ s, r\}]\, ds$, we obtain
    \begin{align*}
        \ex[Q] & = \int_{s=0}^\infty \Pr\left[|\w^*\cdot \x| \ge \max\{ s, r\}\right]\, ds \\
        & \le \int_{s=0}^{{r}}\Pr\left[|\hat{\w}^*\cdot \x| \ge \frac{r}{B}\right]\, ds + \int_{s={r}}^\infty \Pr\left[|\hat{\w}^*\cdot \x| \ge \frac{s}{B}\right]\, ds \tag{since $\|\w^*\|_2\le \wnorm$}\\
        & \le r \cdot e^{-{r}/{\wnorm}} + \int_{s=r}^{\infty} e^{- s / \wnorm}\, ds\tag{see Def. \ref{definition:concentration}}\\
        & = r \cdot e^{-{r}/{\wnorm}} + \wnorm\cdot e^{-r/\wnorm}
    \end{align*}
    Therefore, in total, we have that $\elloneerror(p) \le 2\ln 4 \cdot \opt_\g + 2(r+B)e^{-r/B} + 2r\cdot \opt_\g + \epsilon$ and we may obtain Theorem \ref{theorem:logistic-absolute-expected} by picking $r=\wnorm\cdot \ln\frac{1}{\opt}$.

\end{document}